\newif\ifarXiv
\definecolor{CBGreen}{RGB}{0, 158, 115}
\definecolor{cardinal} {RGB}{196, 30, 58}
\definecolor{bleu}     {RGB}{ 49,140,231}
\pgfplotsset{
  compat = 1.17
}
\newtheorem{definition}{Definition}
\newtheorem{corollary}{Corollary}
\DeclareMathOperator{\ECT}{ECT}
\DeclareMathOperator{\lECT}{\ell -ECT}
\newcommand{\norm}[1]{\left\lVert#1\right\rVert}
\begin{document}

\twocolumn[
\icmltitle{Diss-l-ECT: Dissecting Graph Data with Local Euler Characteristic Transforms}



\icmlsetsymbol{equal}{*}

\begin{icmlauthorlist}

\icmlauthor{Julius von Rohrscheidt}{1,2}
\icmlauthor{Bastian Rieck}{1,2,3}

\end{icmlauthorlist}

\icmlaffiliation{1}{Institute of AI for Health, Helmholtz Munich, Germany}
\icmlaffiliation{2}{Technical University of Munich, Germany}
\icmlaffiliation{3}{University of Fribourg, Switzerland}

\icmlcorrespondingauthor{Julius von Rohrscheidt}{julius.rohrscheidt@helmholtz-munich.de}
\icmlcorrespondingauthor{Bastian Rieck}{bastian.grossenbacher@unifr.ch}

\icmlkeywords{Machine Learning, ICML}

\vskip 0.3in
]

\printAffiliationsAndNotice{}

\begin{abstract}
  \emergencystretch=3em
  \hyphenpenalty   =10000
  The Euler Characteristic Transform~(ECT) is an
  efficiently-computable geometrical-topological invariant that
  characterizes the \emph{global} shape of data. 
  In this paper, we introduce the \emph{Local Euler Characteristic
  Transform}~($\lECT$), a novel extension of the ECT particularly
  designed to enhance expressivity and interpretability in graph
  representation learning.
  Unlike traditional graph neural networks~(GNNs), which may lose
  critical local details through aggregation, the $\lECT$ provides
  a lossless representation of local neighborhoods.
  This approach addresses key limitations in GNNs by preserving local
  structures while maintaining global interpretability.
  Moreover, we construct a rotation-invariant metric based on $\lECT$s
  for spatial alignment of data spaces.
  Our method exhibits superior performance compared to standard GNNs
  on a variety of benchmark node-classification tasks, while also
  offering theoretical guarantees that demonstrate its effectiveness.
\end{abstract}

\section{Introduction}

\emergencystretch=3em
\hyphenpenalty=1000

Traditionally, graph neural network~(GNNs) rely on message-passing
schemes to aggregate node features. While effective for many tasks, this
approach often leads to the loss of critical \emph{local} information, as the
aggregation process can diffuse and obscure the original node vector
representations~\citep{topping2022oversquashing}.
This limitation makes it challenging to preserve local characteristics
that may be essential for some applications.
To address this limitation, we harness the \emph{Euler Characteristic
Transform}~\citep[$\ECT$]{Turner14a}, an expressive geometrical-topological
invariant that can be computed efficiently.
Relying only on~(weighted) sums, the $\ECT$ can be computed efficiently,
making it a powerful tool for representation
learning~\citep{roell2023differentiable}.
Moreover, the $\ECT$ is known to be \emph{invertible} for data
in~$\mathbb{R}^n$, ensuring that the original data can always be
reconstructed~\citep{curry2022many, ghrist2018persistent}.
In this paper, we extend the ECT to \emph{local} neighborhoods,
presenting the \emph{Local Euler Characteristic Transform}~($\lECT$),
a novel method designed to preserve \emph{local} structure while
retaining \emph{global} interpretability. The $\lECT$ captures both
topological~(i.e., structural) and geometrical~(i.e., spatial)
information around each data point, making it particularly
advantageous for graph-based or higher-order data.\footnote{%
  Our experiments deliberately focus on graphs, but we note that the
  method can be extended to novel higher-order datasets based on
  \emph{simplicial complexes}, for instance~\citep{Ballester25a}.
}
The $\lECT$ thus becomes an expressive \emph{fingerprint} of local
neighborhoods, specifically addressing the challenge of neighborhood
aggregation in featured graphs while ensuring the \emph{lossless
representation} of local node neighborhoods.
We theoretically investigate how $\lECT$s maintain critical local
details, and therefore provide a nuanced representation that can be
used for downstream graph-learning tasks such as node classification.
Our method is highly effective, particularly for tasks where node
feature aggregation may obscure essential differences, such as in graphs
with high heterophily. Additionally, the $\lECT$ framework's natural
vector representation makes it compatible with a wide range of
machine-learning models, facilitating both performance and
interpretability.

\textbf{As our main contributions,} we
\begin{inparaenum}[(i)]
    \item construct $\lECT$s in the context of embedded simplicial
      complexes~(and graphs) and theoretically investigate their
      expressivity in the special case of featured graphs,
    \item empirically show that this expressivity positions $\lECT$s as
      a powerful general tool for interpretable node classification,
      often superior to standard GNNs, and
    \item introduce an efficiently computable rotation-invariant metric
      based on $\lECT$s that facilitates the spatial alignment of
      geometric graphs.
\end{inparaenum}

\section{Background}

We define our method in the most general setting, i.e., that of
a simplicial complex, while also providing a brief introduction to graph
neural networks.

\paragraph{Simplicial Complexes}  
A \emph{simplicial complex} $K$ is a mathematical structure that
generalizes graphs to model higher-order~(non-dyadic) relationships and interactions.
While graphs model pairwise~(dyadic) connections between entities using nodes
and edges, simplicial complexes extend this representation to higher
dimensions by including triangles~(2-simplices),
tetrahedra~(3-simplices), and their higher-dimensional analogues.
Let $v_0,\dots,v_k\in\mathbb{R}^n$ be \emph{affinely independent} points.  
The \emph{(geometric) $k$-simplex} determined by these vertices is the convex hull  
\begin{equation*}
  \sigma = [v_0 v_1 \cdots v_k]
        :=
        \Bigl\{
           \sum_{i=0}^{k}\lambda_i v_i
           \Bigm|
           \lambda_i \ge 0,\;
           \sum_{i=0}^{k}\lambda_i = 1
        \Bigr\}.
\end{equation*}
The points $v_0,\dotsc,v_k$ are called the \emph{vertices} of $\sigma$.
Simplices determined by a subset of $v_0,\dotsc,v_k$ are called
\emph{faces} of $\sigma$.
Formally, a simplicial complex is a finite collection of simplices such
that every face of a simplex in the collection is also in the
collection, and the intersection of any two simplices is either empty or
a common face. 

\paragraph{Euler Characteristic}  
The \emph{Euler characteristic} $\chi$ is a topological invariant that
provides a summary of the ``shape'' or structure of a topological space,
such as a simplicial complex. It is defined as the alternating sum of the
number of simplices in each dimension, i.e.,
\begin{equation}
    \chi(K) = \sum_{k=0}^{d} (-1)^k \sigma_k(K),
\end{equation}
where $\sigma_k(K)$ is the number of $k$-dimensional simplices in
the simplicial complex $K$, and $d$ is the dimension of $K$.
As a topological invariant, the Euler characteristic remains unchanged
under transformations like homeomorphisms, making it---despite its
simplicity---a fundamental tool for distinguishing topological spaces.

\paragraph{Graph Neural Networks and Message Passing}  
\emph{Graph neural networks}~(GNNs) are a class of neural network
models designed to operate on graph-structured data. They extend
neural networks by incorporating the relational structure
inherent to graphs, enabling the learning of tasks such as \emph{node
classification}.
The core mechanism of many GNNs is \emph{message passing}, an
iterative procedure that propagates information through the graph to
update node representations based on their local neighborhood. Given
a graph $G = (V, E)$, where $V$ is the set of nodes and $E$ is the set
of edges, each node $v \in V$ is associated with a \emph{feature vector}
$\mathbf{x}_v$. At each layer $t$, i.e., each message-passing step,
a new node embedding $\mathbf{h}_v^{(t+1)}$ is calculated via
\begin{equation}
 \text{UPDATE}\left(\mathbf{h}_v^{(t)}, \text{AGG}\left(\{\mathbf{h}_u^{(t)} \mid u \in \mathcal{N}(v)\}\right)\right),
\end{equation}
where $\mathcal{N}(v)$ denotes the set of neighbors of node $v$, and
$\text{AGG}$ and $\text{UPDATE}$ are learnable functions parameterized
by the model. The $\text{AGG}$ function combines information from
neighboring nodes, while the $\text{UPDATE}$ function refines node
embeddings. Popular choices for these functions include mean, sum, and
attention mechanisms. Through multiple layers of message passing, GNNs
aggregate information from larger neighborhoods, capturing both local
and global graph structure~\citep{velickovic23a}.

\section{Related Work}

\emergencystretch=3em
\hyphenpenalty=10000

GNNs have revolutionized the field of graph representation learning by
enabling end-to-end learning of node/graph embeddings through message passing
\citep{kipf2016semi}. However, traditional GNNs face theoretical
limitations that pose fundamental obstructions to learning expressive
and general representations of graph data \citep{xu2018powerful}. Related to the
latter phenomenon, GNNs are known to suffer from issues like
\emph{oversmoothing}~\citep{zhang2023comprehensive,rusch2023survey} and
\emph{oversquashing}~\citep{di2023over}.
\citet{hamilton2017inductive} and \citet{velivckovic2017graph} have
addressed these issues by incorporating sampling and attention
mechanisms into the message-passing paradigm. However, even these
advancements often show limited performance, particularly in graphs with
high heterophily, and there is no ``general'' GNN capable of handling
both heterophilous and homophilous graphs.
Recent work in graph machine learning thus started incorporating
additional inductive biases into architectures, such as geometric
information~\citep{Southern23a, pei2020geom, joshiexpressive} or topological
information~\citep{horn2021topological, Verma24a}, with the ultimate
goal of improving the \emph{expressivity} of a model, i.e., its
capability to distinguish between non-isomorphic families of
graphs~\citep{Morris23a}. 

Many such endeavors arise from the field of
\emph{topological deep learning}~\citep{Papamarkou24a}, which aims to
develop models that are ``aware'' of the underlying
topology of a space, and thus also capable of handling data with
higher-order relations. 
Other constructions include special architectures for heterophilous
tasks that are \emph{not} based on
message-passing~\citep{lim2021large}, or \emph{modifications} of the
graph itself to improve predictive performance.
\citet{suresh2021breaking}, for instance, use edge rewiring to raise
graph assortativity and thus gain accuracy under low homophily,
whereas \citet{luan2022revisiting} mix feature channels during
aggregation to obtain state-of-the-art results on heterophilous
benchmarks.
Finally, as \citet{rampavsek2022recipe} show,
a hybrid model, combining message-passing~(local information) with
attention~(global information) via structural encodings,
may exhibit high expressivity and high scalability.
Subsequently, \citet{mueller2024attending} extended these results by
providing a taxonomy of elements related to the ``design space'' of
graph transformers.

As a geometrical-topological invariant, the $\ECT$ is poised to
contribute to more expressive architectures. Contributing   Being
already a popular tool in topological data analysis~\citep{Turner14a,
ghrist2018persistent}, recent extensions started tackling the
integration into deep-learning
architectures~\citep{roell2023differentiable} or the incorporation of
additional invariance properties~\citep{curry2022many,
marsh2024detecting}.
Despite advantageous performance in shape-classification tasks, however,
\emph{all} existing contributions solely focus on \emph{global} $\ECT$s
and do not discuss any \emph{local} aspects, which are crucial for our
approach.
In addition to the $\ECT$, some works also use other topology-based
tools in graph learning, primarily \emph{persistent homology}~(PH), an
expressive but computationally expensive geometrical-topological
invariant. Examples of this approach include \citet{Rieck19b} and
\citet{hofer2020graph}, who use PH for graph classification, or
\citet{zhao2019learning}, who learn a weighted kernel on topological
descriptors arising during PH computations.
Closest to our approach in spirit is \citet{zhao2020persistence}, who
include topological features of graph neighborhoods into a GNN, again
leveraging PH.
However, to the best of our knowledge, ours is the first work to develop
\emph{local} variants of the ECT for graph-learning tasks, analyze the
theoretical properties of such local variants, and finally show their
empirical utility for node classification.

\section{Methods}\label{sec:methods}

\paragraph{Euler Characteristic Transform~(ECT)}
The \emph{Euler Characteristic Transform}~(ECT) of a simplicial complex
$X \subset \mathbb{R}^n$ is a function $\ECT(X)\colon S^{n-1} \times \mathbb{R} \to \mathbb{Z}$, given by
\begin{equation}
    \ECT(X)(v,t) := \chi(\{ x \in X \mid x \cdot v \leq t \}),
\end{equation}
where $\chi$ denotes the Euler characteristic and $x \cdot v$ denotes
the Euclidean dot product.
The interpretation of $\ECT(X)$ is that it scans the ambient space of $X$ in every direction and records the Euler characteristic of the sublevel sets.
The $\ECT(X)$ is \emph{invertible}, meaning that $X$ can be recovered
from $\ECT(X)$, as long as $X$ is a so-called constructible
set~\citep{ghrist2018persistent, curry2022many}.
The main focus of this work are \emph{compact geometric simplicial
complexes}~(like geometric graphs), which are constructible, and
thus the invertibility theorem applies in our setting. 
Note that in practice, we \emph{approximate} $\ECT(X)$ via $\overline{\ECT}(X)_{(m,l)}
:=\ECT(X)_{\vert \{ v_1,\dots,v_m \} \times \{ t_1,\dots,t_l \} }$ for
uniformly-distributed directions $v_1,\dots,v_m \in S^{n-1}$ and
filtration steps $t_1, \dots , t_l \in \mathbb{R}$. Since $X$ is
compact, $t_1, \dots , t_l$ can be chosen to lie in a compact interval
$[a,b]$ with $t_1 = a$ and $t_l = b$, and so that the sequence $\{
t_i\}_i$ forms a uniform partition of $[a,b]$. We note that this
approximation is efficiently computable and has a natural representation
as a vector of dimension $m \cdot l$. Regarding the choice of the
magnitudes of $m,l$ we notice that the \emph{expected} nearest-neighbor
distance for uniform samples on $S^{n}$ scales as
$\mathcal{O}((\nicefrac{\log m}{m})^{1/n})$~\citep{beck1987irregularities},
and that the equidistant partitioning of a compact interval scales as
$\mathcal{O}(\nicefrac{1}{l})$, leading to $\mathcal{O}((\nicefrac{\log
m}{m})^{1/(n-1)}l^{-1})$ for the total approximation error of the domain of $\ECT(X)$.
\citet{curry2022many} prove that the aforementioned approximation
actually determines the \emph{true} value, provided that $m, l$ are
sufficiently large.
We notice that both translations and scalings of $X$ in the ambient
space lead to a reparametrization of $\ECT(X)$. Hence, $\ECT(X)$ remains
essentially unaltered~(up to a parameter change) under these two types
of transformations.

\paragraph{Local ECT ($\lECT$)}
Given a geometric simplicial complex $X \subset \mathbb{R}^n$ and
a vertex $x \in X$, we define the \emph{local ECT} of x with respect to
$k \geq 0$ as 
\begin{equation}
    \lECT_k(x;X) := \ECT(N_k(x;X)),
\end{equation}
where $N_k(x;X)$ denotes an appropriate \emph{local neighborhood} of $x$ in
$X$, whose locality scale is controlled by a parameter $k$. Usually,
$N_k(x;X)$ will be either the full subcomplex of $X$, which is spanned by
the $k$-hop neighbors of $x$, or the full subcomplex of $X$, which is
spanned by the $k$-nearest vertices of $x$. The first important special
case arises when $X$ is a $0$-dimensional simplicial complex, i.e.,
a point cloud. In this case, the full subcomplex of $X$, which is
spanned by the $k$-nearest vertices of $x$, $N_k(x;X)$, is given
by the $k$-nearest neighbors of $x$. 
Being based on the Euler Characteristic, the construction of $\lECT$s
appears to be purely topological at first glance. However, in light of
the invertibility theorem, we note that $\lECT(x;X)$ can be interpreted
as a \emph{fingerprint} of a local neighborhood of $x$ in $X$. The upshot is
that this fingerprint can be well approximated in practice, making it
possible to obtain \emph{local representations} of combinatorial data embedded
in Euclidean space.
Similar to the approximation of the ECT, this approximation works by
sampling $v_1,\dots,v_m \in S^{n-1}$ and $t_1, \dots , t_l \in
\mathbb{R}$, and considering $\overline{\ECT}(N_k(x;X))_{(m,l)}$,
instead of $\lECT_k(x;X)$. The latter quantity is well-computable in
practice, and the approximation error can be controlled by the sample
sizes $m$ and $l$, as we discussed above. Again, this approximation has
a natural representation as a vector of dimension $m \cdot l$, enabling
us to encode local structural information of point neighborhoods in an
approximate lossless way that can readily be used by machine-learning
algorithms for downstream tasks.

\subsection{Properties of $\lECT$s}

Our formulation of $\lECT$s provides a natural representation of local
neighborhoods of geometric simplicial complexes. One important special
case is that of \emph{featured graphs}, meaning graphs in which every node
admits a feature vector. The latter data structure forms the basis of
many modern graph-learning tasks, such as node classification, graph
classification, or graph regression. The predominant class of methods to
deal with these graph learning problems are message-passing graph neural
networks.
We develop an alternative procedure for dealing with featured graph
data, built on $\lECT$s and we show that $\lECT$s provide sufficient
information to perform message passing, which we explain in the
following.
\begin{definition}
A \emph{featured graph} is a (non-directed) graph $\mathcal{G}$ such that every node $v \in \mathcal{G}$ admits a feature vector $x(v) \in \mathbb{R}^n$. We denote the set of nodes of $\mathcal{G}$ by $V(\mathcal{G})$, and the set of edges by $E(\mathcal{G})$.
\end{definition}
We notice that a featured graph $\mathcal{G}$ can naturally be
interpreted as a graph embedded in $\mathbb{R}^n$, by representing each
node feature vector as a point in $\mathbb{R}^n$, and by drawing an edge
between two embedded points if and only if there is an edge between the
underlying nodes in $\mathcal{G}$. This construction yields a graph
isomorphism between $\mathcal{G}$ and the embedded graph if and only if
for any pair of nodes $v,w \in \mathcal{G}$ with $v \neq w$ we have
$x(v) \neq x(w)$ for their associated feature vectors. In practice, the
latter assumption can always be achieved by adding an arbitrarily small
portion of Gaussian noise to each feature vector, and we therefore may
restrict ourselves to featured graphs that yield an isomorphism on their
Euclidean embeddings.\footnote{%
Alternatively, we can drop the requirement of an embedding by noting
that a featured graph can be considered as an abstract simplicial
complex~$\mathcal{G}$ with an \emph{arbitrary} function~$f\colon \mathcal{G} \to \mathbb{R}^n$
defined on its vertices and edges. In this case, we may define the ECT as
$\ECT(X)(v,t) := \chi(\{ f^{-1}\{x \in \mathbb{R}^n \mid x \cdot v \leq
  t \})$. This formulation, developed by \citet{Marsh23a}, demonstrates
  that the ECT is \emph{generally} applicable and does not require node
  features to provide an embedding of a graph.
}
We now show that $\lECT$s are in fact expressive
graph-learning representations.
\begin{restatable}{theorem}{ThmExpressivity}
    Let $\mathcal{G}$ be a featured graph and let $\{ \lECT_1(x;\mathcal{G}) \}_{x}$ be the collection of local $\ECT$s with respect to the $1$-hop neighborhoods in $\mathcal{G}$. Then the collection $\{ \lECT_1(x;\mathcal{G}) \}_{x}$ provides the necessary~(non-learnable) information for performing a single message-passing step on $\mathcal{G}$, in the sense that for a given vertex $x \in \mathcal{G}$ one can reconstruct the feature vectors of its $1$-hop neighborhood from $\lECT_1(x;\mathcal{G})$.
  \label{thm:expressivity}
\end{restatable}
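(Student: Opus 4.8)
The plan is to reduce the statement directly to the invertibility of the (global) $\ECT$ on constructible sets, applied to the local subcomplex $N_1(x;\mathcal{G})$. First I would unfold the definition: by construction $\lECT_1(x;\mathcal{G}) = \ECT(N_1(x;\mathcal{G}))$, where $N_1(x;\mathcal{G})$ is the full geometric subcomplex of the Euclidean embedding of $\mathcal{G}$ spanned by the $1$-hop neighbors of $x$ (together with $x$ itself, depending on the convention). By the discussion preceding the theorem, and in particular the assumption (achievable by an arbitrarily small Gaussian perturbation) that distinct nodes carry distinct feature vectors, this feature embedding is a graph isomorphism onto its image; hence $N_1(x;\mathcal{G})$ is an honest compact geometric simplicial complex in $\mathbb{R}^n$ whose $0$-skeleton is exactly $\{x(u) \mid u \in \mathcal{N}(x)\}$, possibly enlarged by the single point $x(x)$.

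Next I would invoke the invertibility theorem for the $\ECT$ \citep{ghrist2018persistent, curry2022many}: since finite geometric simplicial complexes are compact and constructible (as recalled earlier in the paper), the set $N_1(x;\mathcal{G})$ is uniquely determined by $\ECT(N_1(x;\mathcal{G})) = \lECT_1(x;\mathcal{G})$. Therefore one can reconstruct $N_1(x;\mathcal{G})$ as an embedded complex, extract its $0$-skeleton, and---since in a message-passing step the feature vector $x(x)$ of the vertex $x$ being updated is already available as an input to the $\text{UPDATE}$ map---discard $x(x)$ if present, obtaining precisely the multiset $\{x(u) \mid u \in \mathcal{N}(x)\}$ of neighbor feature vectors (these are distinct as points, again by injectivity of the embedding). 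Feeding this collection into $\text{AGG}$ and then $\text{UPDATE}(x(x), \cdot)$ reproduces one message-passing layer using only the fixed, non-learnable inversion map; in fact the reconstruction additionally returns the edges among the neighbors, i.e., strictly more local structure than a vanilla message-passing step consumes.

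The main obstacle is verifying that the hypotheses of the inversion theorem genuinely apply to the \emph{local} neighborhood and that the reconstruction delivers the neighbor features on the nose. Constructibility is immediate for finite geometric complexes, so the care goes into (a) the injectivity of the feature embedding, which guarantees that the recovered $0$-skeleton is exactly the neighbor feature set rather than a quotient of it---this is what the preceding Gaussian-noise remark secures---and (b) singling out $x$ inside its own neighborhood: purely structural identification can fail (for instance when $\{x\} \cup \mathcal{N}(x)$ induces a clique), so the argument must use that $x(x)$ is known, which is exactly the setting of a message-passing step. A final caveat worth stating explicitly: in practice only the sampled approximation $\overline{\ECT}(N_1(x;\mathcal{G}))_{(m,l)}$ is available, so exact recovery holds in the limit of sufficiently fine direction and threshold sampling, by the quoted result of \citet{curry2022many}; the theorem as stated concerns the exact $\lECT_1$, for which the chain of reductions above is clean and unconditional.
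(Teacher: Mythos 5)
Your argument is correct and follows essentially the same route as the paper's own proof: reduce to the case where the feature embedding is a graph isomorphism (via the Gaussian-noise remark), then apply the ECT invertibility theorem for constructible/compact geometric complexes to recover $N_1(x;\mathcal{G})$ and hence the neighbor feature vectors. Your extra care about singling out $x$ within its reconstructed neighborhood and about the exact-versus-sampled $\ECT$ is a sensible refinement of details the paper's proof leaves implicit, not a different approach.
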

\cref{thm:expressivity} tells us that for a featured graph
$\mathcal{G}$, the collection $\{ \lECT_1(x;\mathcal{G}) \}_{x}$ already contains
sufficient information to perform a single step of message passing.
The advantage of using $\lECT$s instead of message passing to represent
featured graph data lies in the possibility to \emph{additionally} use $\{
\lECT_k(x;\mathcal{G}) \}_{x}$ for $k \geq 2$, which contain both
structural and feature vector information of larger neighborhoods of
nodes in the graph. This type of information is typically \emph{not}
explicitly available through message passing since passing messages to
non-direct neighbors depends on prior message passing steps, which
solely produce an aggregation of neighboring feature vectors.

In addition to essentially subsuming the information from one
message-passing step, we can also show that the $\lECT$ is ``aware'' of
local structures like subgraphs.
As shown by \citet{chen2020can}, message-passing graph neural networks
\emph{cannot} perform counting of induced subgraphs for \emph{any}
connected substructure consisting of 3 or more nodes.
By contrast, we will now show that $\ECT$s for featured graphs and their
local variants can indeed be used to perform subgraph counting. We start
with the definitions of the necessary concepts.
\begin{definition}\label{def:isomorphic_graphs}
    Two featured graphs $\mathcal{G}_1$ and $\mathcal{G}_2$ are
    isomorphic if there is a bijection $\pi\colon V(\mathcal{G}_1) \to
    V(\mathcal{G}_2)$, such that $(v,w) \in E(\mathcal{G}_1)$ if and
    only if $(\pi(v),\pi(w)) \in E(\mathcal{G}_2)$ and so that for all
    $v \in \mathcal{G}_1$ one has $x(v) = x(\pi(v))$ for the respective
    feature vectors.
\end{definition}

A featured graph $\mathcal{G}_S$ is called a \emph{subgraph} of $\mathcal{G}$
if $V(\mathcal{G}_S) \subset V(\mathcal{G})$ and $E(\mathcal{G}_S)
\subset E(\mathcal{G})$, such that the respective node features
remain unaltered under the induced embedding.
A featured graph $\mathcal{G}_S$ is called an \emph{induced subgraph} of
$\mathcal{G}$, if $\mathcal{G}_S$ is a subgraph of $\mathcal{G}$, and if
$E(\mathcal{G}_S) = E(\mathcal{G}) \cap \mathcal{G}_S$.
For two featured graphs $\mathcal{G}$ and $\mathcal{G}_S$, we define
$C_{\text{Sub}}(\mathcal{G};\mathcal{G}_S)$ to be the number of subgraphs in
$\mathcal{G}$ that are isomorphic to $\mathcal{G}_S$. Similarly, we
define $C_\text{Ind}(\mathcal{G};\mathcal{G}_S)$ to be the number of induced
subgraphs in $\mathcal{G}$ which are isomorphic to $\mathcal{G}_S$.
\begin{restatable}{theorem}{ThmIsomorphism}
    Two featured graphs $\mathcal{G}_1$ and $\mathcal{G}_2$ are isomorphic if and only if  $\ECT(\mathcal{G}_1)=\ECT(\mathcal{G}_2)$.
  \label{thm:isomorphic_graphs}
\end{restatable}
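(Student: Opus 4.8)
The plan is to prove both directions separately, with the forward direction being essentially trivial and the converse carrying all the weight. First, if $\mathcal{G}_1$ and $\mathcal{G}_2$ are isomorphic featured graphs, then by Definition \ref{def:isomorphic_graphs} the isomorphism $\pi$ induces a bijection of vertex sets that preserves both adjacency and feature vectors; hence the two embedded geometric graphs in $\mathbb{R}^n$ are literally the same subset of $\mathbb{R}^n$ (the image of the vertex set and the straight-line edges between embedded vertices coincide). Since $\ECT$ depends only on the embedded point set $X \subset \mathbb{R}^n$ through its sublevel sets $\{x \in X \mid x \cdot v \le t\}$, identical embedded complexes yield identical transforms, so $\ECT(\mathcal{G}_1) = \ECT(\mathcal{G}_2)$.

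For the converse, I would invoke the invertibility theorem for the $\ECT$ on constructible sets, stated earlier in the excerpt: a compact geometric simplicial complex $X \subset \mathbb{R}^n$ is constructible, so $X$ can be recovered from $\ECT(X)$ as a subset of $\mathbb{R}^n$. Concretely, the zero-dimensional part of $X$ — the embedded vertices — can be identified from $\ECT(X)$ (for instance as the points where the local contribution to the Euler characteristic behaves like that of an isolated vertex, or simply as the recoverable $0$-skeleton of the recovered complex), and the $1$-simplices are recovered as the straight segments present in the recovered set. Thus $\ECT(\mathcal{G}_1) = \ECT(\mathcal{G}_2)$ forces the embedded complexes to be equal as subsets of $\mathbb{R}^n$, which means the same vertex coordinates and the same adjacency pattern among coinciding coordinates.

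It then remains to translate this equality of embedded complexes back into a featured-graph isomorphism. Here I would use the standing assumption, made just before the theorem, that we restrict to featured graphs whose embeddings are injective on vertices (distinct nodes get distinct feature vectors, which can always be enforced by an arbitrarily small perturbation). Under this assumption the map sending each node to its feature vector is a bijection onto the embedded vertex set, so identifying the two embedded vertex sets gives a canonical bijection $\pi\colon V(\mathcal{G}_1)\to V(\mathcal{G}_2)$; equality of the embedded edge sets gives $(v,w)\in E(\mathcal{G}_1)\iff(\pi(v),\pi(w))\in E(\mathcal{G}_2)$; and by construction $x(v)=x(\pi(v))$. Hence $\mathcal{G}_1\cong\mathcal{G}_2$ in the sense of Definition \ref{def:isomorphic_graphs}.

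The main obstacle is the converse direction, and specifically making rigorous the step ``equal $\ECT$ $\Rightarrow$ equal embedded complex.'' The invertibility results cited (\citet{ghrist2018persistent, curry2022many}) recover $X$ as a constructible \emph{set}, and one must argue that for a geometric graph this set determines the combinatorial data of vertices and edges — i.e.\ that the $1$-dimensional strata and their endpoints are intrinsic to the set. For graphs embedded generically this is clear, but degenerate configurations (three collinear embedded vertices, or an edge passing through a third vertex) require care; the injectivity-of-features assumption plus, if needed, a genericity remark handles these, and I would flag that the footnote's alternative abstract formulation of the $\ECT$ sidesteps the embedding subtleties entirely.
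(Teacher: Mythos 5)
Your proposal follows essentially the same route as the paper's own proof: the forward direction is immediate because corresponding nodes have equal feature vectors and hence identical embeddings, and the converse invokes the $\ECT$ invertibility theorem to equate the embedded complexes and then reads off the featured-graph isomorphism via the standing assumption that distinct nodes have distinct feature vectors. Your additional care about recovering the combinatorial vertex/edge structure from the recovered set (collinear or overlapping configurations) is a point the paper's terse proof glosses over, but it is a refinement of, not a departure from, the same argument.
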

An immediate consequence of the previous Theorem is:
\begin{corollary}
$\ECT$s can perform subgraph counting.
\end{corollary}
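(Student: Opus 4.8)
The plan is to deduce the corollary directly from \cref{thm:isomorphic_graphs}, which identifies the $\ECT$ as a \emph{complete} invariant for isomorphism of featured graphs. Concretely, to compute $C_{\text{Sub}}(\mathcal{G};\mathcal{G}_S)$ I would enumerate every subgraph $H=(V',E')$ of $\mathcal{G}$ --- that is, every pair with $V'\subseteq V(\mathcal{G})$, $E'\subseteq E(\mathcal{G})$, and both endpoints of each edge in $E'$ lying in $V'$ --- equip $H$ with the feature vectors inherited from $\mathcal{G}$ (so that $H$ is itself a featured graph, hence an embedded, compact, and thus constructible complex), and compute $\ECT(H)$. By \cref{thm:isomorphic_graphs}, $H$ is isomorphic to $\mathcal{G}_S$ as a featured graph if and only if $\ECT(H)=\ECT(\mathcal{G}_S)$, so that
\begin{equation*}
  C_{\text{Sub}}(\mathcal{G};\mathcal{G}_S)
  = \sum_{H \subseteq \mathcal{G}} \mathbbm{1}\bigl[\ECT(H) = \ECT(\mathcal{G}_S)\bigr],
\end{equation*}
and the sum is finite since $\mathcal{G}$ has finitely many subgraphs.

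For induced-subgraph counting the argument is identical, except that the enumeration ranges only over vertex subsets $V' \subseteq V(\mathcal{G})$, each giving the induced subgraph $\mathcal{G}[V']$ with $E' = E(\mathcal{G}) \cap \binom{V'}{2}$; equipping $\mathcal{G}[V']$ with inherited features and applying \cref{thm:isomorphic_graphs} again yields
\begin{equation*}
  C_{\text{Ind}}(\mathcal{G};\mathcal{G}_S)
  = \sum_{V' \subseteq V(\mathcal{G})} \mathbbm{1}\bigl[\ECT(\mathcal{G}[V']) = \ECT(\mathcal{G}_S)\bigr].
\end{equation*}
The same reasoning localizes to $\lECT$s: a $k$-hop (or $k$-nearest) neighborhood of a vertex is itself a featured graph, so $\lECT_k(x;\mathcal{G})$ is a complete invariant of that neighborhood and can be used to detect and count occurrences of a fixed pattern up to the chosen locality scale.

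The only points that require care, rather than constituting genuine obstacles, are: (i) checking that each enumerated subgraph inherits a bona fide embedding, which is immediate from the definition of (induced) subgraph in the excerpt, where node features are stipulated to remain unaltered; and (ii) ensuring that equality of $\ECT$s is interpreted as equality of functions on $S^{n-1}\times\mathbb{R}$, for which \cref{thm:isomorphic_graphs} already supplies the ``if and only if'' statement --- in practice the finite approximation $\overline{\ECT}(\,\cdot\,)_{(m,l)}$ suffices once $m,l$ are large enough, by the determinacy result of \citet{curry2022many}. I do not anticipate any deeper difficulty: all the mathematical content resides in \cref{thm:isomorphic_graphs}, and the corollary is a bookkeeping consequence of it, in sharp contrast with the impossibility result of \citet{chen2020can} for message-passing GNNs on substructures of three or more nodes.
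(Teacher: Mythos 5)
Your argument is correct and matches the paper's intent exactly: the paper states the corollary as an ``immediate consequence'' of \cref{thm:isomorphic_graphs} without further proof, and your enumerate-subgraphs-and-compare-$\ECT$s bookkeeping is precisely that intended consequence. The extra remarks on inherited embeddings and on finite approximations are fine but not needed for the corollary itself.
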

We therefore conclude that $\ECT$-based methods for graph-representation
learning can be more powerful than message-passing-based approaches,
suggesting the development of \emph{hybrid} architectures, making use of
both message passing \emph{and} $\ECT$ variants.

\subsection{Rotation-Invariant Metric based on Local ECTs}\label{par:metric}

The aforementioned invariance properties of $\ECT$s with respect to
translations and scalings naturally raise the question if $\lECT$s may
be used to compare the local neighborhoods of two distinct
points/vertices.
However, the $\ECT$ is sensitive to rotations since rotating the
underlying simplicial complex leads to a misalignment of the respective
directions in $S^{n-1}$. Because a local comparison should \emph{not} depend on
the choice of a coordinate system, this property is a fundamental
obstruction of using $\lECT$ as a local similarity measure.
We therefore construct a novel \emph{rotation-invariant metric} as
follows. Let $X,Y \subset \mathbb{R}^n$ be two finite geometric simplicial
complexes. Since $X,Y$ are finite, $\ECT(X)$ and $\ECT(Y)$ only take
finitely many values, and we may define a similarity measure
$d_{\ECT}(X,Y)$  as
\begin{equation}
  d_{\ECT}(X,Y):=\!\!\!\inf_{\rho \in \text{SO}(n)}\!\!\norm{(\ECT(X)- \ECT(\rho Y))}_{\infty}\!.
\end{equation}
We first prove that this similarity measure satisfies the definitions of a metric.
\begin{restatable}{theorem}{ThmMetric}
    $d_{\ECT}$ is a metric on the collection of rotation classes of finite simplicial complexes embedded in $\mathbb{R}^n$.
  \label{thm:rotation_invariant_metric}
\end{restatable}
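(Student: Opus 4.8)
The plan is to verify the four metric axioms for $d_{\ECT}$ on the quotient set of $\mathrm{SO}(n)$-orbits of finite geometric simplicial complexes in $\mathbb{R}^n$. The easy axioms first: \emph{non-negativity} is immediate since $\norm{\cdot}_\infty \ge 0$ and an infimum of non-negative numbers is non-negative, and \emph{symmetry} follows from the substitution $\rho \mapsto \rho^{-1}$ together with the rotation-equivariance of the ECT, namely $\ECT(\rho Y)(v,t) = \ECT(Y)(\rho^{-1} v, t)$, which shows $\norm{\ECT(X)-\ECT(\rho Y)}_\infty = \norm{\ECT(\rho^{-1}X)-\ECT(Y)}_\infty$; taking infima over $\rho \in \mathrm{SO}(n)$ on both sides gives $d_{\ECT}(X,Y)=d_{\ECT}(Y,X)$. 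One also checks the definition is independent of the chosen representatives in each rotation class, which is the same equivariance computation. For the \emph{triangle inequality}, fix representatives $X,Y,Z$; for any $\rho,\tau\in\mathrm{SO}(n)$ the ordinary triangle inequality for $\norm{\cdot}_\infty$ gives $\norm{\ECT(X)-\ECT(\tau\rho Z)}_\infty \le \norm{\ECT(X)-\ECT(\rho Y)}_\infty + \norm{\ECT(\rho Y)-\ECT(\tau\rho Z)}_\infty$, and the second term equals $\norm{\ECT(Y)-\ECT(\tau Z)}_\infty$ by equivariance; taking the infimum over $\tau$ and then over $\rho$ yields $d_{\ECT}(X,Z)\le d_{\ECT}(X,Y)+d_{\ECT}(Y,Z)$.

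The substantive axiom is \emph{identity of indiscernibles}: $d_{\ECT}(X,Y)=0$ if and only if $X$ and $Y$ lie in the same rotation class. The ``if'' direction is trivial (take $\rho$ to be the rotation carrying $Y$ to $X$, giving a zero term). For ``only if'', the key point is that the infimum is actually \emph{attained}. Here I would use compactness: since $X$ and $Y$ are finite simplicial complexes, they are bounded, and $\mathrm{SO}(n)$ is compact, so the map $\rho \mapsto \norm{\ECT(X)-\ECT(\rho Y)}_\infty$ is minimized at some $\rho_0\in\mathrm{SO}(n)$ — provided this map is lower semicontinuous in $\rho$. If $d_{\ECT}(X,Y)=0$, then $\norm{\ECT(X)-\ECT(\rho_0 Y)}_\infty=0$, i.e. $\ECT(X)=\ECT(\rho_0 Y)$ as functions on $S^{n-1}\times\mathbb{R}$. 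Invertibility of the ECT for constructible (in particular, compact geometric simplicial complex) sets then forces $X=\rho_0 Y$ as subsets of $\mathbb{R}^n$, so $X$ and $Y$ represent the same rotation class.

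The main obstacle is the continuity/attainment step: $\rho\mapsto\ECT(\rho Y)$ is \emph{not} continuous, because the Euler characteristic of a sublevel set jumps as vertices cross the hyperplane $\{x\cdot v = t\}$ under rotation. So I would argue more carefully. One route: the value $\ECT(\rho Y)(v,t)$ only depends on the relative order of the finitely many inner products $\{w\cdot v : w \in \mathrm{vert}(\rho Y)\}$ against $t$; as $\rho$ ranges over $\mathrm{SO}(n)$ and $(v,t)$ over the domain, the function $\ECT(\rho Y)$ takes values in a \emph{finite} set of possible ECT-functions, and $\mathrm{SO}(n)$ is partitioned into finitely many pieces on which $\ECT(\rho Y)$ is constant (these are semialgebraic, cut out by the sign conditions on $(\rho y_i - \rho y_j)\cdot v$ and similar). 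Hence $\norm{\ECT(X)-\ECT(\rho Y)}_\infty$ takes only finitely many values over $\rho\in\mathrm{SO}(n)$, so the infimum over this finite set is a minimum, attained at some $\rho_0$ — and then the invertibility argument above closes the proof. An alternative, if one wants to avoid the stratification bookkeeping, is to take a minimizing sequence $\rho_m\to\rho_0$ in the compact group $\mathrm{SO}(n)$ and argue that on the diagonal-closed set where the inner products separate, the ECT is locally constant, handling the measure-zero ``collision'' directions by the fact that $\norm{\cdot}_\infty=\sup$ over the domain picks up open regions where the ECT agrees — but the finite-stratification argument is cleaner and I would use that. Note one should also confirm $d_{\ECT}$ is finite, which holds since $\norm{\ECT(X)-\ECT(\rho Y)}_\infty$ is a supremum of integers bounded by the total simplex counts of $X$ and $Y$.
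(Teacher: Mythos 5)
Your overall route is the same as the paper's: non-negativity and symmetry via rotation-equivariance of the $\ECT$, the triangle inequality by inserting a rotated copy of $Y$ and re-absorbing rotations into the infimum, and identity of indiscernibles by arguing the infimum is attained and then invoking invertibility of the $\ECT$ for compact geometric simplicial complexes to conclude $X=\rho_0 Y$. (One small slip in the triangle inequality: by simultaneous rotation, $\norm{\ECT(\rho Y)-\ECT(\tau\rho Z)}_{\infty}=\norm{\ECT(Y)-\ECT(\rho^{-1}\tau\rho Z)}_{\infty}$, not $\norm{\ECT(Y)-\ECT(\tau Z)}_{\infty}$; this is harmless, since conjugation by $\rho$ is a bijection of $\mathrm{SO}(n)$, so the infimum over $\tau$ is unchanged.)

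The one step where you go beyond the paper---justifying that the infimum is attained, which the paper simply asserts (implicitly resting on the earlier remark that the $\ECT$s of finite complexes take only finitely many values)---is where your argument as written fails. It is not true that $\mathrm{SO}(n)$ is partitioned into finitely many pieces on which $\rho\mapsto\ECT(\rho Y)$ is constant, nor that only finitely many $\ECT$-functions arise: since $\ECT(\rho Y)(v,t)=\ECT(Y)(\rho^{-1}v,t)$, the function varies by a continuous reparametrization of $S^{n-1}$ and generically gives a continuum of pairwise distinct functions (already for $Y$ a single vertex $p\neq 0$, the functions $(v,t)\mapsto \mathbf{1}[\rho p\cdot v\le t]$ differ for different $\rho p$). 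What is true---and is all you need, and is essentially your closing remark---is that $\norm{\ECT(X)-\ECT(\rho Y)}_{\infty}$ is a nonnegative integer bounded by the total simplex counts of $X$ and $Y$, because each $\ECT$ value is an Euler characteristic. Hence the set $\{\norm{\ECT(X)-\ECT(\rho Y)}_{\infty}:\rho\in\mathrm{SO}(n)\}$ is a finite set of integers, its infimum is a minimum attained at some $\rho_0$, and $d_{\ECT}(X,Y)=0$ forces $\ECT(X)=\ECT(\rho_0 Y)$, whence $X=\rho_0 Y$ by invertibility. Replacing the stratification/lower-semicontinuity discussion by this integrality observation closes the gap, and the resulting proof coincides with the paper's.
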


\cref{thm:rotation_invariant_metric} ensures that we may use $d_{\ECT}$
as a metric that measures the similarity between embedded simplicial
complexes up to rotation. In particular, for a simplicial complex $X
\subset \mathbb{R}^n$ and $x,y \in X$, we have a rotation-invariant
measure to compare local neighborhoods of $x$ and $y$ by defining
$d_{\ECT}^k(x,y;X)$ as
$\inf_{\rho \in \text{SO}(n)} \norm{\lECT_k(x;X)- \lECT_k( y;\rho X)}_{\infty}$.
In practice, we approximate $d_{\ECT}(X,Y)$ by 
\begin{equation}\label{metric_approx}
  \inf_{\rho \in \text{SO}(n)} \norm{\overline{\ECT}(X)_{(m,l)}- \overline{\ECT}(\rho Y)_{(m,l)}}_{\infty}
\end{equation}
for a choice of samples $v_1,\dots,v_m \in S^{n-1}$ and $t_1, \dots
, t_l \in \mathbb{R}$; this works analogously for the local version
$d_{\ECT}^k(x,y;X)$.
As discussed before, the approximations of the $\ECT$s used in
\cref{metric_approx} have a natural vector representation, so that
the $\norm{\bullet}_{\infty}$ in \cref{metric_approx} is in fact the
maximum of the entry-wise absolute differences between the two
respective representation vectors. Hence, the approximation shown in
\cref{metric_approx} is \emph{efficiently computable}. However, our
experiments in \cref{sec:Experiments} use the Euclidean metric for
differentiability reasons. 

\subsection{Limitations}

While $\lECT$s present clear advantages in preserving \emph{local}
details, there are some trade-offs to consider. In certain cases,
message-passing GNNs, which aggregate information across neighbors, may
be preferable, in particular for tasks where \emph{global} context is
more important than local details~(see \citet{Coupette25a} for a recent
analysis of graph-learning datasets under different perspectives).
Furthermore, while our method is
computationally feasible on medium-sized datasets~(as demonstrated in
our experiments), the complexity of ``na\"ively'' calculating $\lECT$s
increases for larger $k$ and with the size and density of the graph,
suggesting a need for improved methods~(see \cref{app:complexity} for an
extended discussion).

\section{Experiments}\label{sec:Experiments}

In this section, we present experiments to empirically evaluate the
performance of the $\lECT$-based approach in graph representation
learning, focusing on node-classification tasks. We aim to demonstrate
how $\lECT$ representations can capture structural information more
effectively than traditional message-passing mechanisms, especially in
scenarios with high heterophily~(even though we consider other scenarios
as well).
Our experiments compare the performance of $\lECT$-based models to
several standard GNN models, namely graph attention
networks~\citep[GAT]{velivckovic2017graph},
graph convolutional networks~\citep[GCN]{kipf2016semi}, graph
isomorphism networks~\citep[GIN]{xu2018powerful}, as well as
a heterophily-specific architecture~\citep[H2GCN]{zhu2020beyond}.
Furthermore, we showcase how the rotation-invariant metric from
\cref{sec:methods} may be used for spatial alignment of graph data.

\subsection{$\lECT$s in Graph Representation Learning}

The link between message passing and $\lECT$s~(cf.\
\cref{thm:expressivity}) encourages us to empirically validate the
expressivity of $\lECT$s for node-classification tasks.
Given a featured graph $\mathcal{G}$ and fixed $k \geq 0$, we assign
$\lECT_k(x;\mathcal{G})$ to every node $v \in \mathcal{G}$. We then use the $\lECT$
corresponding to a node together with the respective node feature vector
as the \emph{input} to classification models.
Subsequently, we focus on XGBoost~\citep{Chen16a}, as we found it to
outperform more complex models. However, our $\lECT$ can be used with
\emph{any} model.
Notice that our experiments are not about claiming
state-of-the-art performance but rather about showcasing that an
approach based on $\lECT$ yields results that are \emph{on a par with
and often superior} to more complex graph-learning techniques based on
message passing, while at the same time working well in \emph{both}
heterophilous and homophilous settings.

\paragraph{Implementation details}
%
We assume that we are given a featured graph
$\mathcal{G}$ such that there is an assignment $V(\mathcal{G}) \to
\mathcal{Y}$, with $V(\mathcal{G})$ being the node set of $\mathcal{G}$
and $\mathcal{Y}$ being the space of classes w.r.t.\ the underlying
node-classification task. For a fixed $k \geq 0$, $x \in V(\mathcal{G})$ and
$N_k(x;\mathcal{G})$ being the $k$-hop neighborhood of $x$ in
$\mathcal{G}$, we then approximate $\lECT_k(x;\mathcal{G})$ via
$\overline{\ECT}(N_k(x;\mathcal{G}))_{(m,l)}$ for sampled directions and
filtration steps, as explained in \cref{sec:methods}.
%
We use $m=l=64$~(but the number of samples may be tuned in practice) and
use the the resulting $m \cdot l$-dimensional vector(s)
$\overline{\ECT}(N_k(x;\mathcal{G}))_{(m,l)}$, together with the feature
vector of $x$, as additional inputs for the classifier.
The architecture of our baseline models includes a two-layer MLP after
every graph-neighborhood aggregation layer, as well as \emph{skip
connections} and \emph{layer normalization}.
We train each model for $1000$ epochs and report the test accuracy
corresponding to the state of the model that admits the \emph{maximum
validation accuracy} during training. This makes the predictive
performance of our baseline models directly comparable with
\citet{platonov2023critical}.

\paragraph{WebKB Datasets}
For all datasets of the WebKB collection~\cite{pei2020geom}, our $\lECT$-based
approach outperforms the baseline GNNs by
far~(cf.\ \cref{tab:ml_performance_webkb}; GraphSAGE results from
\citealt{Xu24a}).
While the combination of
both $\lECT_1$ and $\lECT_2$ performs best for ``Texas,'' using only
$\lECT_1$ leads to best performance for ``Wisconsin.'' However, for the
two aforementioned datasets, the combination of $\lECT_1$ and $\lECT_2$
only slightly improves the performance in comparison to $\lECT_1$,
suggesting that $1$-hop neighbor information is already sufficiently
informative here.

\begin{table}[tbp]
\centering
\setlength{\tabcolsep}{3pt} 
\let\b\bfseries
\small
\caption{%
  Performance (\emph{accuracy}, in percent) of graph-learning
  models on WebKB datasets~($5$ training runs).
}
\label{tab:ml_performance_webkb}
\begin{tabular}{@{}lSSS@{}}
\toprule
\textbf{Model} & {\textbf{Cornell}} & {\textbf{Wisconsin}} & {\textbf{Texas}} \\
\midrule
GCN               &  45.0 \pm 2.2  &  44.2 \pm 2.6  &  47.3 \pm 1.5  \\
GAT               &  44.7 \pm 2.9  &  48.2 \pm 2.0  &  51.7 \pm 3.2  \\
GIN               &  46.5 \pm 3.1  &  49.7 \pm 2.5  &  54.2 \pm 2.9  \\
GraphSAGE         &\b76.0 \pm 3.5  &  72.9 \pm 1.9  &  71.8 \pm 2.4  \\
H2GCN             &  66.2 \pm 3.5  &  70.2 \pm 2.3  &  72.3 \pm 3.0  \\
\midrule
$\lECT_1$         &  66.8 \pm 4.2  &\b81.2 \pm 2.9  &  74.6 \pm 0.5  \\
$\lECT_2$         &  67.0 \pm 4.9  &  76.1 \pm 2.8  &  73.8 \pm 2.6  \\
$\lECT_1+\lECT_2$ &  67.1 \pm 4.1  &  78.5 \pm 2.6  &\b74.8 \pm 3.1  \\
\bottomrule
\end{tabular}
\end{table}

\paragraph{Heterophilous Datasets}
\citet{platonov2023critical} introduced several \emph{heterophilous}
datasets; we validate our method on ``Amazon Ratings'' and ``Roman
Empire,'' again observing that the combination of $\lECT_1 + \lECT_2$
performs best, substantially outperforming baseline models~(cf.\
\cref{tab:ml_performance_heterophilous}).
The results are closely aligned with findings by
\citet{platonov2023critical}, i.e.,  that specialized architectures like
H2GCN often perform less well than ``standard'' architectures.
Moreover, $\lECT_1$ outperforms $\lECT_2$ on ``Roman Empire,'' while
$\lECT_2$ outperforms $\lECT_1$ on ``Amazon Ratings.''
We interpret this as $1$-hop neighborhoods being particularly
informative for ``Roman Empire,'' while $2$-hop neighborhoods are more
informative for ``Amazon Ratings.''

\begin{table}[tbp]
\centering
\setlength{\tabcolsep}{3pt} 
\let\b\bfseries
\small
\caption{
  Performance (\emph{accuracy}, in percent) of graph-learning
  models on \emph{heterophilous} datasets~($5$ training runs).
}
\label{tab:ml_performance_heterophilous}
\begin{tabular}{@{}lSSS@{}}
\toprule
\textbf{Model}      & \textbf{Amazon Ratings} &  \textbf{Roman Empire}\\
\midrule
GCN                 &  42.3 \pm 0.7           &   73.3 \pm 0.8        \\
GAT                 &  44.6 \pm 0.9           &   76.4 \pm 1.2        \\
GIN                 &  44.1 \pm 0.8           &   56.8 \pm 1.0        \\
GraphSAGE           &  42.2 \pm 0.5           &   77.4 \pm 0.8        \\
H2GCN               &  40.1 \pm 0.7           &   64.2 \pm 0.9        \\
\midrule
$\lECT_1$           &  48.4 \pm 0.3           &   80.4 \pm 0.4        \\
$\lECT_2$           &  49.6 \pm 0.3           &   78.0 \pm 0.3        \\
$\lECT_1+\lECT_2$   &\b49.8 \pm 0.3           & \b81.1 \pm 0.4        \\
\bottomrule
\end{tabular}
\end{table}

\begin{table}[tbp]
\centering%
\setlength{\tabcolsep}{3pt}
\let\b\bfseries%
\caption{%
  Performance (\emph{accuracy}, in percent) of graph-learning models on
  Amazon datasets~($5$ training runs).%
}
\label{tab:ml_performance_computers_photo}
\small%
\begin{tabular}{@{}lSSS@{}}
\toprule
\textbf{Model}    & \textbf{Computers} & \textbf{Photo}  \\
\midrule
GCN               &   91.6 \pm 1.6  & 93.6 \pm 1.7  \\
GAT               &\b 92.4 \pm 1.3  & 94.8 \pm 1.1 \\
GIN               &   55.9 \pm 1.5  & 82.2 \pm 1.3  \\
GraphSAGE         &   89.2 \pm 0.9  & 92.5 \pm 0.7  \\
H2GCN             &   84.5 \pm 1.4  & 92.8 \pm 1.2  \\
\midrule
$\lECT_1$ & 89.6 \pm 0.3  & 94.1 \pm 0.3  \\
$\lECT_2$ & 90.1 \pm 0.5  & 94.4 \pm 0.7  \\
$\lECT_1+\lECT_2$ & 92.2 \pm 0.6  & \b 94.9 \pm 0.6 \\
\bottomrule
\end{tabular}
\end{table}

\paragraph{Amazon dataset}
The Amazon dataset~\citep{shchur2018pitfalls} consists of the two
co-purchase graphs ``Computers'' and ``Photo.''
While GAT outperforms all methods on ``Computers,'' the combination of
$\lECT_1$ and $\lECT_2$ performs best on ``Photo''~(cf.\
\cref{tab:ml_performance_computers_photo}). Overall, $\lECT$-based
methods exhibit competitive performance here, given that they are
\emph{not} based on message passing.

\begin{table}[btp]
\centering%
\small%
\setlength{\tabcolsep}{3pt}
\let\b\bfseries%
\caption{%
  Performance (\emph{accuracy}, in percent) of graph-learning
  models on \emph{heterophilous} datasets~($5$ training runs).%
}
\label{tab:ml_performance_merged}%
\begin{tabular}{@{}lSSS@{}}
\toprule
\textbf{Model}    & \textbf{Actor} & \textbf{Squirrel} & \textbf{Chameleon} \\
\midrule
GCN               &   30.7 \pm 2.1  &   28.9 \pm 1.4  &   42.8 \pm 1.8      \\
GAT               &   31.1 \pm 1.8  &   31.8 \pm 1.3  &   47.3 \pm 1.3      \\
GIN               &   26.5 \pm 2.0  &   35.4 \pm 1.5  &   43.1 \pm 1.7      \\
GraphSAGE         &   30.2 \pm 1.4  &   33.3 \pm 0.7  &   45.2 \pm 1.3      \\
H2GCN             &   30.7 \pm 1.9  &\b 40.8 \pm 1.4  &\b 62.7 \pm 1.6      \\
\midrule
$\lECT_1$         &\b 31.4 \pm 1.9  &   35.6 \pm 0.7  &   43.5 \pm 1.7     \\
$\lECT_2$         &   30.1 \pm 1.3  &   35.6 \pm 0.8  &   40.4 \pm 1.5     \\
$\lECT_1+\lECT_2$ &   30.9 \pm 0.7  &   35.3 \pm 1.5  &   43.9 \pm 0.7     \\
\bottomrule
\end{tabular}
\end{table}

\paragraph{Actor/Wikipedia Datasets}
%
Moving to additional heterophilous datasets with high feature
dimensionality, we compare predictive performance on
``Actor''~\citep{pei2020geom} as well as ``Chameleon'' and
``Squirrel''~\citep{rozemberczki2021multi}; cf.\
\cref{tab:ml_performance_merged}.
For ``Actor'', the $\lECT_1$ model achieves the highest accuracy, while
$\lECT_1 + \lECT_2$ performs slightly worse. $\lECT_2$ performs the
lowest on this dataset, suggesting that larger neighborhoods are
detrimental here.
For the other datasets, the heterophily-specific
model H2GCN performs best. However, $\lECT_1$ and
$\lECT_1$ and $\lECT_2$ exhibit similar~(or even better) performance as
\emph{all other standard baselines}, showing the advantages of $\lECT$ methods
even in the absence of hyperparameter tuning.

\begin{table}[btp]
\centering
\small
\setlength{\tabcolsep}{3pt}
\let\b\bfseries%
\caption{%
  Performance (\emph{accuracy}, in percent) of graph-learning models on
  ``Planetoid'' datasets~($5$ training runs).
}
\label{tab:ml_performance_planetoid}
\begin{tabular}{@{}lSSS@{}}
\toprule
\textbf{Model}    & \textbf{Cora} & \textbf{CiteSeer} & \textbf{PubMed} \\
\midrule
GCN               &  88.1 \pm 1.2 &  74.6 \pm 1.5 &  85.3 \pm 4.7 \\
GAT               &\b88.3 \pm 1.1 &\b75.3 \pm 1.5 &  85.7 \pm 4.2 \\
GIN               &  85.0 \pm 1.5 &  72.2 \pm 1.7 &  87.0 \pm 0.5 \\
GraphSAGE         &  82.2 \pm 1.2 &  68.1 \pm 1.2 &  84.3 \pm 0.7 \\
H2GCN             &  85.4 \pm 1.6 &  72.4 \pm 1.9 &  86.4 \pm 0.5 \\
\midrule
$\lECT_1$         &  87.6 \pm 0.6 &  72.1 \pm 0.6 &  90.2 \pm 0.5 \\
$\lECT_2$         &  87.2 \pm 0.7 &  72.3 \pm 0.8 &\b90.3 \pm 0.5 \\
$\lECT_1+\lECT_2$ &  87.8 \pm 0.6 &  72.5 \pm 0.7 &\b90.3 \pm 0.5 \\
\bottomrule
\end{tabular}
\end{table}

\paragraph{Planetoid Datasets}
%
We also analyze node-classification performance on datasets from the
``Planetoid'' collection~\citep{yang2016revisiting}, comprising
``Cora,'' ``CiteSeer,'' and ``PubMed.''
We trained all models using
a random 75--25 split; cf.\ \cref{tab:ml_performance_planetoid}.
Although GCN and GAT perform slightly better than $\lECT$ methods for
``Cora'' and ``CiteSeer,'' the gap is surprisingly small. For
``PubMed,'' the $\lECT$-based models even outperform both \emph{all
comparison partners}.
These findings suggest that the \emph{theoretical} expressivity of
$\lECT$s, which we formally established in \cref{sec:methods}, also has
\emph{practical} implications, providing an alternative way of dealing
with graph data that is not restricted by the underlying model
architecture and therefore allows for interpretability.

\paragraph{Post-hoc Evaluation}
%
To evaluate our methods \emph{across} datasets, we used critical
difference diagrams, enabling us to compare the performance of various
models on both homophilic and heterophilic graph datasets~(cf.\
\Cref{app:evaluation}).
The results highlight the superior performance of $\lECT$-based
approaches over standard baselines and heterophily-specific
architectures such as H2GCN. Notably, the $\lECT_1 + \lECT_2$ method
achieved the best average rank of 2 and even the least effective
$\lECT$-based model ($\lECT_2$) outperformed all non-$\lECT$-based
methods, including GAT. Further evaluation against heterophily-specific
models reported in the literature corroborates these findings. In
comparison to state-of-the-art methods such as H2GCN, GPR-GNN, and
GloGNN, $\lECT_1 + \lECT_2$ achieved a competitive rank, matching GloGNN
and surpassing both GAT and GT. Despite being a \emph{general-purpose
approach} not specifically designed for heterophilic graphs,
$\lECT$-based methods demonstrated exceptional adaptability and
robustness across diverse graph structures. These results establish
$\lECT$-based architectures as a versatile and high-performing solution
for node classification tasks, suitable for tackling challenges across
a wide range of graph data.

\paragraph{Summary of Node-Classification Experiments}
%
We find that $\lECT$s work particularly well in situations where
aggregating neighboring information is \emph{inappropriate}, such as
when dealing with graphs that
exhibit a high degree of heterophily.
In such contexts, our approach may outperform message-passing-based
methods.
The upshot of our method is that local graph information can be
incorporated \emph{without} the architectural necessity to diffuse information
along the graph structure, as it is the case for message-passing-based
models.
While this discrete diffusion process induced by message passing
is useful for a plethora of graph-learning tasks, it can also be an
obstruction in learning the right representation for tasks where node
features of neighbors in the graph should not be aggregated~(cf.\
\citealt{Coupette25a} for a recent analysis of graph-learning datasets
in the context of message passing).
In this sense, $\lECT$s naturally overcome a \emph{fundamental
limitation} inherent to message-passing methods. Another advantage of
$\lECT$s is that they are \emph{agnostic} to the choice of the downstream
model. This permits us to use models that are easy to tune, enabling
practitioners to make use of their graph data without necessarily having
specialized knowledge in GNN training and tuning while at the same time
also working well in the small-sample regime.
Moreover, it permits using models that are \emph{interpretable}, making
our method well-suited for domains where regulatory demands often ask
for levels of interpretability that cannot readily be achieved
by~(graph) neural networks. In fact, by using feature importance
values~(which are directly available for tree-based algorithms like
XGBoost) and since the entries of the $\lECT$ vectors that are used as
the input for the model can be linked to the directions in the
calculation of the $\lECT$s, one may obtain a deeper understanding
of \emph{how} the model arrives at predictions~(see \Cref{app:ablation}
for a more in-depth discussion and an ablation on the number of
directions used to calculate $\lECT$s).

\subsection{Learning Spatial Alignment of Geometric Graphs}

In the following, we use the approach described in
\Cref{sec:methods} in order to learn the spatial alignment of two
data spaces by re-rotating one into the other. We start by showing that
synthetic data, which only differs up to a rotation, can be re-aligned
using $\lECT$s. Moreover, we show that this alignment is stable
with respect to noise, making it a robust measure for the comparison of
local neighborhoods in data. 
In comparison to other spatial alignment methods like the iterative
closest point algorithm, ours does \emph{not} necessitate the
computation of all pairwise distances between points in the respective
spaces. The latter is often a computational bottleneck, especially for
large datasets, thus positioning our method for spatial alignment as
a computationally more efficient method in practice. 
While alignment methods like \emph{Procrustes alignment} are restricted to
point-cloud data, we observe that our approach is also capable of
aligning embedded graph data~(or, more generally, simplicial complexes).
This makes it particularly useful for dealing with \emph{geometric
graphs}, constituting a highly-efficient alternative to more involved
machine-learning models like geometric GNNs~\citep{joshiexpressive}. 

\begin{figure*}[tb]
    \centering
    \subcaptionbox{Examples of geometric graphs\label{sfig:k_star_graph}}{%
      \includegraphics[width=0.25\linewidth]{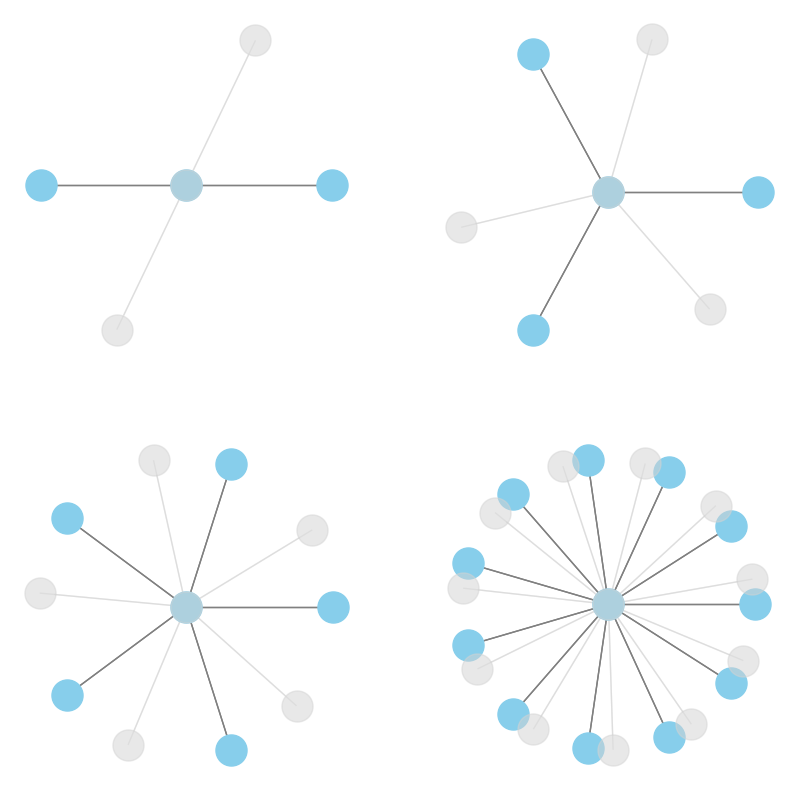}
    }%
    \quad%
    \subcaptionbox{Alignment results}{%
  \begin{tikzpicture}
    \pgfplotsset{%
      boxplot/draw direction = x,
      boxplot/whisker range = 10000, 
    }
    \begin{axis}[%
      axis x line*     = bottom,
      axis y line*     = none,
      ytick            = {1, 2, 3, 4},
      yticklabels      = {11-star graph, 5-star graph, 3-star graph, 2-star graph (line)},
      height           = 4.0cm,
      width            = 0.8\linewidth,
      mark size        = 1pt,
      tick align       = outside,
      xlabel           = {Hausdorff distance},
      ylabel           = {},
      xticklabel style = {%
        /pgf/number format/fixed,
        /pgf/number format/precision=2,
      },
    ]
      \addplot[boxplot] table[y index = 0] {Data/aligned_losses_11_star_graph.txt};
      \addplot[boxplot] table[y index = 0] {Data/aligned_losses_5_star_graph.txt};
      \addplot[boxplot] table[y index = 0] {Data/aligned_losses_3_star_graph.txt};
      \addplot[boxplot] table[y index = 0] {Data/aligned_losses_2_star_graph.txt};

      \addplot [mark=*, mark size=3, only marks, forget plot] coordinates {(0.121,1) (0.315,2) (0.335,3) (0.515,4)};
    \end{axis}
  \end{tikzpicture}%
}
    \caption{%
      A comparison of the Hausdorff distances of aligned graphs. The
      black dots represents the Hausdorff distance between the original
      graph and a randomly-rotated version of itself.
      Our $\lECT$-based alignment always results in substantially
      lower distances, with a median distance close to zero.
}
    \label{fig:graph_alignment}
\end{figure*}

For the subsequent learning problem~(see \Cref{app:Spatial Alignment}
for an example of how to align point-cloud data), we assume that we
are given two embedded simplicial complexes $X,Y \subset \mathbb{R}^n$.
In light of \Cref{sec:methods}, the metric properties of $d_{\ECT}$
ensure that $d_{\ECT}(X,Y)=0$ if $X$ and $Y$ only differ up to
a rotation. We therefore approximate $d_{\ECT}(X,Y)$ via
\begin{equation}\label{eq:learning_metric}
  \min_{\rho \in \mathrm{SO}(n)} \norm{\overline{\ECT}(X)_{(m,l)}- \overline{\ECT}(\rho Y)_{(m,l)}}_{2}^2
\end{equation}
for a choice of directions $v_1,\dots,v_m \in S^{n-1}$ and filtration
steps $t_1, \dotsc, t_l \in \mathbb{R}$.
As explained in \Cref{sec:methods}, the $\ECT$ approximations are given
by vectors, making it feasible to approach the above learning problem by
any gradient-based learning algorithm. The advantage of
this formulation is that it yields both the rotation-invariant loss and
the rotation that leads to this minimum loss.

Geometric graphs provide a compelling example of the utility of our
method, particularly in addressing the challenging problem of graph
re-alignment. For this analysis, we focus on a specific type of
geometric graph known as the $k$-star graph. A $k$-star graph is defined
as a tree with one internal node and $k$ leaves, i.e., a simple
structure with relevant geometric properties~(cf.\ \Cref{sfig:k_star_graph}).
To embed such graphs into a 2D space, we assign a unique 2D vector to each
node, ensuring the assigned vectors are equidistant to maintain
structural symmetry. Furthermore, to introduce variability and assess
robustness, we subject each embedded graph to a random 2D rotation, simulating
realistic perturbations encountered in practical settings. The central
goal is to recover the original graph's orientation by learning the
rotation matrix using the metric defined \Cref{sec:methods}.
We evaluate the performance of our approach by measuring the similarity
between the original graph and its re-rotated version using the
\emph{Hausdorff distance}, a metric that quantifies the maximal deviation
between two sets of points. To ensure significance, we repeated the
learning procedure 200 times, maintaining consistent initializations for
both the graph and the rotation matrix. \Cref{fig:graph_alignment} shows
the results; we observe that our realignment procedure
\emph{consistently} achieves small Hausdorff distances with medians near
zero, indicating the successful recovery of the original graph's
orientation. By contrast, the Hausdorff distances between the original
graph and its perturbed version are significantly larger.
While equivariant GNNs have been shown to struggle with distinguishing
the orientation of rotationally-symmetric
structures~\citep{joshiexpressive}, our method generates graph
representations that are initially sensitive to rotations but can be made
rotation-invariant through alignment of the underlying $\ECT$s. 

These
findings highlight the potential of $\ECT$-based metrics for \emph{robust}
geometric graph alignment, paving the way for broader applications in
domains requiring precise graph-based comparisons such as the analysis
of geometric graphs with constrained parameter budgets~\citep{Maggs24a}.
This result is particularly notable since the $\lECT$ easily outperforms
more involved architectures, pointing towards its overall utility as an
alternative to message-passing graph neural networks. At the same time, we believe that the $\lECT$ could also help in aligning higher-order data like geometric simplicial complexes.

\section{Discussion}

We introduced the \emph{Local Euler Characteristic Transform}~($\lECT$),
providing a novel approach to graph-representation learning
that preserves local structural information \emph{without} relying on
aggregation.
Our method addresses fundamental limitations in message-passing neural
networks, particularly in tasks where aggregating neighboring
information is suboptimal, such as in graphs with heterophily. By
retaining critical local details, $\lECT$s enable more nuanced and
expressive representations, offering significant advantages in node
classification tasks.
One key strength of our approach is its model-agnostic nature, allowing
it to be paired with interpretable machine learning models~(in our
experiments, an XGBoost model was used to provide feature importance
values, for instance).
This is particularly useful in domains such as healthcare, finance, and
legal applications, where regulatory frameworks demand high levels of
transparency and interpretability that are often difficult to achieve
with black-box neural networks. By leveraging $\lECT$s, we can satisfy
these requirements while maintaining the high expressivity and high
predictive performance required for graph-learning tasks.
In this way, $\lECT$-based representations offer a novel pathway toward interpretable machine learning on graph data: they yield topologically-grounded, vectorized encodings of local structure that not only retain predictive power but also support downstream diagnostics.

\paragraph{Future Work: Higher-Order Domains }Being situated at the intersection of geometry and topology, our $\lECT$
method is part of the nascent field of \emph{topological deep
learning}~\citep{Papamarkou24a}, which aims to develop novel inductive
biases that are capable of leveraging additional structural information
from data, both in the context of graphs and in the context of
higher-order domains like \emph{simplicial complexes}. It is in this
context where we believe that future work could be beneficial, in
particular since recent research~\citep{Ballester25a} showed that tasks
on such domains are highly challenging for existing GNNs. Given the
advantageous scalability properties of the $\ECT$~\citep{roell2023differentiable, Turner14a}, we believe that this constitutes
a useful avenue for future research.

\paragraph{Future Work: Comparing Representations}
Containing both geometrical and topological components, we also believe $\ECT$-based methods to be suitable in navigating different representations. Since the $\ECT$ can be considered a compression technique with controllable fidelity properties~\citep{Roell25a}, it could be useful in condensing latent spaces, thus permitting simple and efficient comparisons of models as hyperparameters are being varied. Such \emph{multiverse analyses} are vital for ensuring reproducibility~\citep{Bell22a, Wayland24a}.

\paragraph{Future Work: Hybrid Models}
Beyond representation learning on graphs, our $\lECT$ framework also
opens up new applications in domains where local structure is
critical, such point-cloud analysis~(including sensor data or computer-graphics data), 3D shape analysis, or data from the
life sciences~(like molecular data or biological networks).
Future work could thus explore more efficient algorithms for computing
$\ECT$s and $\lECT$s at scale, as well as hybrid approaches that balance
local and global information more effectively. A highly-relevant
direction would be  the integration of $\lECT$s into existing
message-passing neural networks, similar to recent work that leverages
persistent homology~\citep{Verma24a}.
Moreover, heterophily-specific mechanisms such as a separation of
neighborhood aggregation~(as used in specialized GNN architectures) may
be incorporated into our $\lECT$-based framework to further strengthen
its expressivity in the presence of high-heterophily graphs.

\section*{Software and Data}

Our code is available under \url{https://github.com/aidos-lab/Diss-l-ECT}.
We make use of standard benchmarking datasets, loaded and processed via 
the \texttt{PyTorch Geometric} library~\citep{Fey19a}.

\section*{Acknowledgments}

The first author acknowledges the use of ChatGPT for grammar
suggestions.
Both authors are very grateful for the discussions with the anonymous
reviewers, in particular reviewer \texttt{nYTM}, and the area chair,
who also believed in the merits of this work.
This work has received funding from the Swiss State Secretariat for
Education, Research, and Innovation~(SERI).

\section*{Impact Statement}

This paper presents work whose goal is to advance the field of graph
machine learning. There are many potential societal consequences of our
work, none which we feel must be specifically highlighted here.

\bibliography{main}
\bibliographystyle{icml2025}

\appendix
\onecolumn
\section{Appendix}\label{sec:appendix}

\subsection{Proofs}

We briefly restate all theorem from the main text for the reader's
convenience before providing proofs.

\ThmExpressivity*

\begin{proof}
By the remark in \cref{sec:methods}~(in the paragraph right above the
original statement of this Theorem), we may assume that the natural
embedding of $\mathcal{G}$ into $\mathbb{R}^n$ is a graph isomorphism.
Then, making use of the invertibility theorem, the $1$-hop neighborhood
of a point $x$ in the embedding of $\mathcal{G}$ can be reconstructed
from $\lECT_1(x;\mathcal{G})$. Therefore, the feature vectors of $x$ and
its $1$-hop neighbors can be deduced from $\lECT_1(x;\mathcal{G})$,
which is the only non-learnable information one needs to perform
a message-passing step.
\end{proof}

\ThmIsomorphism*

\begin{proof}
    When two featured graphs are isomorphic in the sense of
    \cref{def:isomorphic_graphs}, their respective Euclidean
    embeddings produce equal $\ECT$s by construction because the node
    feature vectors of two corresponding points under the isomorphism
    are equal. By contrast, let us assume that
    $\ECT(\mathcal{G}_1)=\ECT(\mathcal{G}_2)$. Then by the invertibility
    theorem, the Euclidean embeddings of $\mathcal{G}_1$ and
    $\mathcal{G}_2$ are equal. Therefore, the only information that may
    tell apart the two graphs are their node labels, but this means that
    $\mathcal{G}_1$ and $\mathcal{G}_2$ are isomorphic.
\end{proof}

\ThmMetric*

\begin{proof}
    $d_{\ECT}(X,X)=0$ holds for $\rho$ being the identity. Now assume
    that $d_{\ECT}(X,Y)=0$. Then there exists $\rho \in \mathrm{SO}(n)$ with
    $\norm{(\ECT(X)- \ECT(\rho Y))}_{\infty}=0$. As
    $\norm{\bullet}_{\infty}$ is a norm, it follows that
    $\ECT(X)=\ECT(\rho Y)$, and by the invertibility theorem we obtain
    $X = \rho Y$. This shows the first property of a metric~(note that
    positivity follows from $\norm{\bullet}_{\infty}$). For symmetry,
    note that $\norm{(\ECT(X)- \ECT(\rho
    Y))}_{\infty}=\norm{(\ECT(\rho^{-1} X)- \ECT(Y))}_{\infty}$ since
    rotations are invertible. For the triangle inequality, let $Z$ be
    another finite simplicial complex. $d_{\ECT}(X,Z) $ then reads
    $\inf_{\rho \in \mathrm{SO}(n)} \norm{(\ECT(X)- \ECT(\rho
      Z))}_{\infty}$, which is less than or equal to $\inf_{\rho,\rho'
        \in \mathrm{SO}(n)} (\norm{(\ECT(X)- \ECT(\rho' Y))}_{\infty}+
        \norm{(\ECT(\rho'Y)- \ECT(\rho Z))}_{\infty})$. This term,
        however, is equal to $\inf_{\rho,\rho' \in \mathrm{SO}(n)} (\norm{(\ECT(X)- \ECT(\rho' Y))}_{\infty}+ \norm{(\ECT(Y)- \ECT((\rho')^{-1}
        \rho Z))}_{\infty})$, which is equal to $\inf_{\rho \in SO(n)}
        \norm{(\ECT(X)- \ECT(\rho Y))}_{\infty} + \inf_{\rho \in
        \mathrm{SO}(n)} \norm{(\ECT(Y)- \ECT(\rho Z))}_{\infty}$. But
        this final term is \emph{precisely} the definition of $d_{\ECT}(X,Y) + d_{\ECT}(Y,Z)$.
\end{proof}

\subsection{Computational Complexity}\label{app:complexity}

For a fixed node $x$, the computational complexity of $\lECT_k(x)$ is $O(m \cdot l \cdot |N_k(x)|)$,
where:
\begin{inparaenum}[(i)]
    \item \(m\) is the number of sampled directions,
    \item \(l\) is the number of filtration steps, and
    \item \(|N_k(x)|\) is the number of vertices (or simplices) in the \(k\)-hop neighborhood of \(x\).
\end{inparaenum}

\clearpage

\subsection{Ablation on Directions and Interpretability}
\label{app:ablation}

\begin{figure}[tbp]
    \centering
    \includegraphics[width=0.5\textwidth]{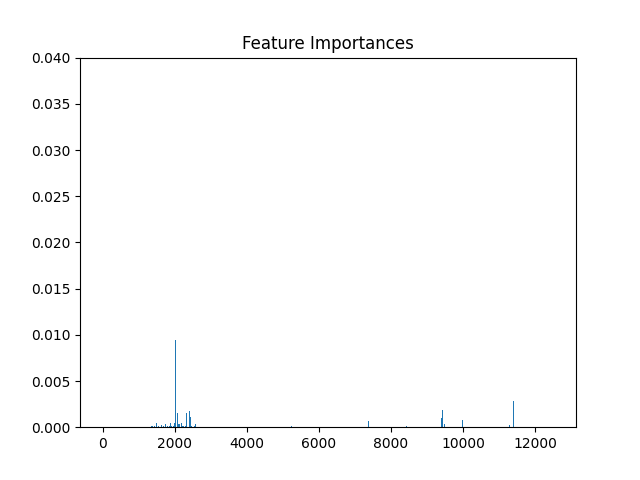}
    \caption{%
      Feature importance scores of an XGBoost model for the ``Coauthor
      Physics'' dataset (using $\lECT_1$). Only a small number of
      features admit high importance scores.
    }
    \label{fig:feature_importances}
\end{figure}

Coming back to our approximation of $\ECT(X)$ via
$\overline{\ECT}(X)_{(m,l)} :=\ECT(X)_{\vert \{ v_1,\dots,v_m \} \times
\{ t_1,\dots,t_l \} }$ for uniformly-distributed directions
$v_1,\dots,v_m \in S^{n-1}$ and filtration steps $t_1, \dots , t_l \in
\mathbb{R}$, we notice that the $(l \cdot (j-1)+1)$-th till $(l \cdot
j)$-th entries of $\overline{\ECT}(X)_{(m,l)}$ correspond to the
direction $v_j$.
The latter gives us the opportunity to get a deeper understanding of how
the model predicts its outcome, by analyzing its feature importance
values~(which are available for tree-based algorithms like XGBoost). 
Therefore, our approach enables us to analyze which features, i.e.,
directions, of the underlying ECT vector are most important. 
In practice, we often observe that a small number of features admits
high feature importance with respect to the corresponding model~(cf.\
\Cref{fig:feature_importances}). This raises the question if we may
use a \emph{smaller} random collection of features and still obtain
reasonably useful results.
We therefore ran experiments for a collection of
datasets for a varying number of randomly-sampled entries of the
$\lECT_1$ vector; cf.\ \cref{tab:ablation_results}.
Here, 4096 corresponds to the whole vector. We observe that for certain
datasets, such as ``Coauthor CS,'' ``Coauthor Physics,'' and ``Amazon
Ratings,'' the
performance of the model only slightly changes when using a reduced
version of the  $\lECT_1$ vector. In light of the results by
\citet{curry2022many}, this observation is not entirely surprising---one
main claim therein is that the $\ECT$ can be determined using
a small number of directions.

\begin{table}[b]
\centering
\small
\setlength{\tabcolsep}{3pt}
\caption{%
  Mean accuracy~(in percent, $5$ runs each) for different
  node-classification tasks, and varying numbers of randomly-sampled
  entries of the corresponding $\lECT_1$ vectors.
}
\sisetup{
  table-alignment = left,
}
\begin{tabular}{lSSSSSS}
\toprule
\textbf{Dataset} & \textbf{0} & \textbf{50} & \textbf{100} & \textbf{500} & \textbf{1000} & \textbf{4096} \\
\midrule
WikiCS & 67.8 & 69.2 & 70.5 & 71.3 & 72.7 & 74.6 \\
Coauthor CS & 92.1 & 92.3 & 92.4 & 92.5 & 92.6 & 92.6 \\
Coauthor Physics & 95.2 & 95.6 & 95.6 & 95.8 & 95.9 & 96.1 \\
Roman Empire & 64.7 & 73.7 & 75.8 & 78.3 & 79.7 & 80.4 \\
Amazon Ratings & 47.9 & 47.9 & 48.2 & 48.4 & 48.2 & 48.4 \\
\bottomrule
\end{tabular}
\label{tab:ablation_results}
\end{table}

\clearpage

\subsection{Additional Node Classification Experiment}\label{app:additional_node_classification}

\paragraph{WikiCS Dataset}
To further validate the effectiveness of our approach, we consider the WikiCS dataset~\citep{mernyei2020wiki}, a medium-sized co-occurrence graph derived from Wikipedia articles on computer science topics. Nodes represent articles, and edges reflect mutual links between them. Each node is equipped with a 300-dimensional embedding, and the task is to classify articles into one of several predefined categories.
As shown in \cref{tab:ml_performance_wikics}, our $\lECT_1$-based method achieves competitive performance compared to message-passing baselines. While GAT obtains the best accuracy overall, $\lECT_1$ performs on par with H2GCN and GCN, despite not relying on neighborhood aggregation. This supports the idea that $\lECT$-based representations can serve as effective input features in classification settings, even for graphs with moderately homophilous structures. The small standard deviation further illustrates the stability of our method across splits.

\begin{table}[tbp]
\centering
\setlength{\tabcolsep}{3pt} 
\let\b\bfseries
\small
\caption{%
  Performance (\emph{accuracy}, in percent) of graph-learning
  models on WikiCS dataset~($5$ training splits).
}
\label{tab:ml_performance_wikics}
\begin{tabular}{@{}l l@{}}
\toprule
\textbf{Model} & \textbf{WikiCS} \\
\midrule
GCN               &  75.2 $\pm$ 0.8 \\
GAT               &  78.7 $\pm$ 1.2 \\
GIN               &  74.2 $\pm$ 1.7 \\
GraphSAGE         &  73.4 $\pm$ 1.5 \\
H2GCN             &  75.3 $\pm$ 1.4 \\
\midrule
$\lECT_1$         &  74.6 $\pm$ 0.5 \\
\bottomrule
\end{tabular}
\end{table}

\subsection{Spatial Alignment of High-Dimensional Data}
\label{app:Spatial Alignment}

\begin{figure}[tbp]
  \centering
  \begin{tikzpicture}
    \pgfplotsset{%
      boxplot/draw direction = x,
      boxplot/whisker range = 10000, 
    }
    \begin{axis}[%
      axis x line*     = bottom,
      axis y line*     = none,
      ytick            = {1, 2},
      yticklabels      = {non-aligned, aligned},
      height           = 3.0cm,
      width            = \linewidth,
      mark size        = 1pt,
      tick align       = outside,
      xlabel           = {Squared $L^2$ distance},
      ylabel           = {},
      xticklabel style = {%
        /pgf/number format/fixed,
        /pgf/number format/precision=2,
      },
    ]
      \addplot[boxplot] table[y index = 0] {Data/distances_non_aligned_mnist_ones.txt};
      \addplot[boxplot] table[y index = 0] {Data/distances_aligned_mnist_ones.txt};
    \end{axis}
  \end{tikzpicture}%
  \caption{%
    A comparison of the squared $L^2$ distances of $\lECT$s of aligned
    and non-aligned MNIST digits of ``1,'' respectively.
  }
  \label{fig:losses_MNIST_aligned_vs_non_aligned}
\end{figure}

Following our previous observations that $d_{\ECT}$ enables us to align
two spaces, we now use it to investigate its effect on high-dimensional
data. We start this discussion with the well-known MNIST benchmark
dataset, following an analysis of local geometrical-topological
structures that we performed previously~\citep{vonRohrscheidt23a}.
We thus first represent each~(gray-scale) image in the dataset as
a $784$-dimensional vector, by flattening the image. In this way, we
obtain a high-dimensional point cloud corresponding to the dataset.
Subsequently, we sample $300$ points of digits of ``$1$'' and calculate
the pairwise distances of their respective $\lECT$~(with respect to the
whole point cloud), for $k=10$. Finally, we calculate the pairwise
distances of the respective aligned $\lECT$s~(by using the approach of
\cref{eq:learning_metric} with $k=10$).
\Cref{fig:losses_MNIST_aligned_vs_non_aligned} shows the results; we
observe that the aligned $\lECT$s have a significantly lower squared $L^2$
distance~(with a median of $\approx 112$) than the non-aligned
ones~(with a median of $\approx 224$), showcasing that rotations cause
dissimilarity between small neighborhoods of points, in many cases.

\subsection{Homophily Scores of Node-Classification Experiments}

\Cref{tab:edge_homophily} reveals that our benchmark suite spans the
entire range from extreme homophily to extreme heterophily.  
At the homophilic end lie the Amazon co-purchase graphs ``Computers''
and ``Photo,'' together with the ``Planetoid'' citation graphs ``Cora,''
``CiteSeer,'' and ``PubMed.'' In each of these networks, at least
\emph{seven of every ten} edges connect nodes with identical
class labels, replicating the conditions under which early
message-passing GNNs achieved their seminal successes.  
Near the middle of the spectrum, ``Cornell'' and the ``Amazon Ratings''
datasets exhibit mixed behavior, with roughly one third of their edges
being heterophilous. Thus, neighborhood aggregation still conveys useful
class-specific information, but the signal is noticeably diluted.  
The lower end is populated by the remaining datasets, where fewer than
\emph{one edge in three} is homophilic, and by ``Roman Empire,'' the most
extreme case in our experimental suite, where only about \emph{one edge
in twenty} links same-label endpoints.  

\begin{wraptable}{l}{0.45\linewidth}
  \centering
  \small
  \setlength{\tabcolsep}{3pt}
  \caption{%
  Edge-homophily ratios for every dataset used in our
  node-classification experiments, sorted in ascending order. As the
  table shows, our experiments comprise a wide variety of datasets.
}
  \label{tab:edge_homophily}
  \begin{tabular}{lc}
    \toprule
    \textbf{Dataset} & \textbf{$H_{\text{edge}}$} \\
    \midrule
    Roman Empire            & 0.047 \\
    Texas                   & 0.110 \\
    Wisconsin               & 0.210 \\
    Actor                   & 0.217 \\
    Squirrel                & 0.220 \\
    Chameleon               & 0.230 \\
    Cornell                 & 0.300 \\
    Amazon Ratings          & 0.380 \\
    CiteSeer                & 0.736 \\
    Amazon Computers        & 0.777 \\
    PubMed                  & 0.802 \\
    Cora                    & 0.810 \\
    Amazon Photo            & 0.827 \\
    \bottomrule
  \end{tabular}
\end{wraptable}
Because six datasets are heterophilic, five are strongly homophilic, and
two occupy the transition zone, we believe our experimental suite to be
effectively \emph{balanced}. A model must therefore operate reliably
across sharply different structural regimes to achieve consistently high
average rank.  Classical message-passing architectures depend on
\emph{homophily} and tend to deteriorate as the ratio falls, whereas our
empirical results from \cref{sec:Experiments} demonstrate that our
proposed $\lECT$ representations retain competitive---and often even
\emph{superior}---accuracy \emph{regardless of homophily level}.
The most conspicuous gains appear precisely on the graphs where
neighbor aggregation is \emph{least informative}, namely ``Roman
Empire,'' ``Texas,'' and ``Wisconsin,'' confirming that $\lECT$ features
capture structural cues that message passing alone fails to exploit.
Hence, the numerical landscape mapped out in \Cref{tab:edge_homophily}
substantiates the claim that our experimental design both stresses the
limits of common GNNs while at the same time showcasing the robustness
of $\lECT$-based approaches in settings where label agreement along
edges is sparse.

\clearpage

\subsection{Post-hoc Evaluation of Node-Classification Experiments}\label{app:evaluation}

A critical difference diagram arranges the average ranks of multiple
models across a set of datasets in order to facilitate overall
performance comparisons between the model performances.
Such diagrams are commonly used when comparing a suite of models on
different datasets~(cf.\ \citet{Borgwardt20} for similar plots in the
context of \emph{graph kernels}).

\Cref{fig:cd-diagram} shows the results for all node-classification
results from \Cref{sec:methods}, including both homophilic and
heterophilic graph datasets. We observe that the $\lECT$-based approaches
outperform standard methods and the heterophily-specific architecture
H2GCN by far, when averaged over all datasets.\footnote{%
  We used
\url{https://github.com/hfawaz/cd-diagram} for the creation of the
critical difference diagram.}
The best-performing method $\lECT_1 + \lECT_2$ exhibits an average rank
of $2$, while the worst performing method is GIN with an average rank of
$5.7$. Even the worst-performing $\lECT$-based method ($\lECT_2$)
performs better than the best non-$\lECT$-based method, i.e., GAT.
However, the most interesting fact that can be gleaned from the diagram
involves the statistical significance of the results. Methods connected
over the same bar are not performing statistically significantly
differently. This seemingly \emph{negative} result has a positive
implication: Despite being orders of magnitude more complex, even
specialized graph neural networks do not perform statistically
significantly better than $\lECT$-based methods. Given that our results
are based on a standard XGBoost model without \emph{any} task-specific
hyperparameter tuning, we believe that this demonstrates the potential
and practical utility of our proposed methods.

\begin{figure}[tbp]
    \centering
    \includegraphics[width=0.5\textwidth]{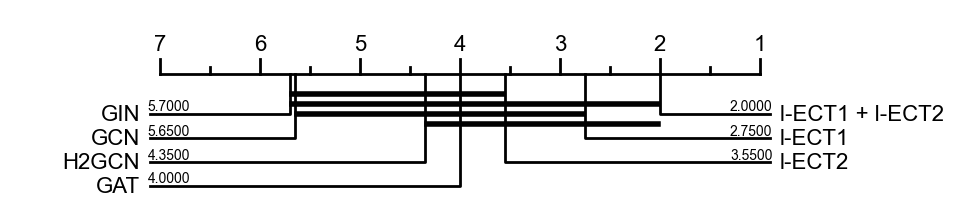}
    \caption{%
      \emph{Critical difference diagram} showing the ranks of different models across
      all node-classification tasks from \Cref{sec:Experiments}. Even
      the worst-performing $\lECT$-based approach~($\lECT_2$) exhibits
      superior performance to \emph{all other methods}, when averaged
      across all tasks.
    }
    \label{fig:cd-diagram}
\end{figure}

\begin{wraptable}{l}{0.45\linewidth}
    \centering
    \sisetup{
      table-alignment = left,
      round-mode      = places,
      table-format    = 2.3,
      round-precision = 1,
    }
    \small
    \setlength{\tabcolsep}{3pt}
    \caption{%
    Ranks~(lower is better) of models from \citet{platonov2023critical}
    across the \emph{heterophilic} datasets therein, in comparison to our
    methods. Notice that our method is a \emph{general-purpose} method
    for node classification and neither geared towards heterophily nor
    homophily.
  }
  \label{tab:ranks_critical}
    \begin{tabular}{lS}
        \toprule
        \textbf{Model} & \textbf{Rank} \\
        \midrule
        H2GCN \citep{zhu2020beyond}              & 18.250 \\
        CPGNN \citep{zhu2021graph}              & 16.750 \\
        GPR-GNN \citep{chien2020adaptive}        & 15.250 \\
        ResNet \citep{he2016deep}               & 13.750 \\
        l-ECT1                                   & 12.375 \\
        l-ECT2                                   & 12.375 \\
        GAT \citep{velivckovic2017graph}         & 12.250 \\
        GT \citep{shi2020masked}                 & 11.000 \\
        \textbf{l-ECT1 + l-ECT (ours)}           & \bfseries 11.000 \\
        GloGNN \citep{li2022finding}             & 11.000 \\
        ResNet+SGC \citep{wu2019simplifying}     & 10.750 \\
        FAGCN \citep{bo2021beyond}              & 10.000 \\
        JacobiConv \citep{wang2022powerful}      & 9.750 \\
        GCN \citep{kipf2016semi}                & 9.625 \\
        GBK-GNN \citep{du2022gbk}                & 9.000 \\
        ResNet+adj \citep{zheleva2009join}       & 7.250 \\
        SAGE \citep{hamilton2017inductive}       & 5.875 \\
        GAT-sep \citep{velivckovic2017graph}     & 5.500 \\
        GT-sep \citep{shi2020masked}            & 5.250 \\
        FSGNN \citep{maurya2021improving}        & 3.000 \\
        \bottomrule
    \end{tabular}
\end{wraptable}
To further evaluate the performance of our methods in comparison to those
reported in the literature, we also included a comparison with the results
presented by \citet{platonov2023critical}, using the ranks of the
respective models as the basis for evaluation; cf.\
\cref{tab:ranks_critical}.
Among the
listed methods, several, such as H2GCN, CPGNN, and GPR-GNN, are
explicitly designed for heterophilic graph settings, leveraging
specialized architectures to handle the challenges posed by such data.
In contrast, our $\lECT_1 + \lECT_2$ method, despite being
a general-purpose approach not tailored specifically for heterophilic
settings, achieves a competitive rank of $11$. This performance is on
a par with other top-performing heterophily-specific models, such as
GloGNN, and outperforms well-established architectures like GT and GAT
by a significant margin. Overall, these results highlight the robustness
and adaptability of our method, demonstrating its ability to handle
diverse graph structures effectively without requiring customization for
heterophilic scenarios. In consideration of the results given in
\Cref{fig:cd-diagram}, this makes $\lECT$-based approaches
a versatile general-purpose solution for node
-classification
tasks.

\clearpage

\subsection{Spatial Alignment of Wedged Spheres}

\begin{figure}[tbp]
  \centering
  \begin{tikzpicture}
    \pgfplotsset{%
      boxplot/draw direction = x,
      boxplot/whisker range = 10000, 
    }
    \begin{axis}[%
      axis x line*     = bottom,
      axis y line*     = none,
      ytick            = {1, 2},
      yticklabels      = {non-aligned, aligned},
      height           = 3.0cm,
      width            = \linewidth,
      mark size        = 1pt,
      tick align       = outside,
      xlabel           = {Squared $L^2$ distance},
      ylabel           = {},
      xticklabel style = {%
        /pgf/number format/fixed,
        /pgf/number format/precision=2,
      },
    ]
      \addplot[boxplot] table[y index = 0] {Data/distances_wedged_spheres_non-aligned.txt};
      \addplot[boxplot] table[y index = 0] {Data/distances_wedged_spheres_aligned.txt};
    \end{axis}
  \end{tikzpicture}%
  \caption{%
    A comparison of the squared $L^2$ distances of the $\ECT$s of
    aligned and non-aligned wedged spheres, respectively.
    We see that alignment results in a median loss of zero, thus
    effectively showing that the two spaces are the same.
  }
  \label{fig:losses_aligned_vs_non_aligned}
\end{figure}

We approximate the optimization problem from \cref{eq:learning_metric}
to show that we can learn a spatial alignment of two data spaces,
while the distance between $\ECT$s of non-aligned spaces that only
differ up to a rotation will generally be high. We start with
a so-called \emph{wedged sphere}, meaning two $2$-dimensional spheres which are
concatenated at a gluing point~(cf.\
\Cref{fig:wedged_spheres_rotated_vs_rerotated} and
\citealt{vonRohrscheidt23a}).
We use $2000$
uniformly-sampled points from such a wedged sphere, and compare the
squared $L^2$ loss between the $\ECT$s of this sample and a rotation of the
same data space. We repeat this procedure $500$ times, where at each
step both the sample of the wedged sphere and the rotation matrix which
yields the rotated version of the same space are sampled randomly.
We notice that the $L^2$ losses between the non-aligned spaces are high
(with a median of around $19$), whereas the $L^2$ losses of the non-aligned
spaces are significantly lower, with a median loss close to zero~(cf.\
\Cref{fig:losses_aligned_vs_non_aligned}).
Moreover, we observe that the $\ECT$ of the same space
significantly changes when the coordinate system is transformed, which
corroborates the necessity of a rotation-invariant metric for the comparison
of $\ECT$s.
We conclude that an alignment of the $\ECT$s of the two underlying data
spaces in fact leads to an alignment of the data spaces itself, as
promised by the theoretical results in \Cref{sec:methods}.

\begin{figure*}[tbp]
    \begin{minipage}[t]{0.5\textwidth}
        \centering
\begin{tikzpicture}
        \begin{axis}[
            width=8cm, 
            height=8cm, 
            view={-50}{50}, 
            grid=both,
            xlabel={$x$},
            ylabel={$y$},
            zlabel={$z$},
            xtick = \empty,
            ytick = \empty,
            ztick = \empty,
            xmin=-2, xmax=2,
            ymin=-2, ymax=2,
            zmin=-2, zmax=2,
            legend style={at={(1,1)}, anchor=north east, font=\small},
            legend cell align={left},
        ]
        
        \addplot3[
            only marks,
            mark=*,
            mark size=1pt,
            color=bleu
        ]
        table[
            col sep=comma,
            x=x,
            y=y,
            z=z
        ] {Data/wedges_spheres_original.csv};
        \addlegendentry{Original point cloud}
        
        \addplot3[
            only marks,
            mark=*,
            mark size=1pt,
            color=cardinal
        ]
        table[
            col sep=comma,
            x=x,
            y=y,
            z=z
        ] {Data/wedges_spheres_rotated.csv};
        \addlegendentry{Rotated point cloud}
        
        \end{axis}
    \end{tikzpicture}
    \end{minipage}%
    \begin{minipage}[t]{0.5\textwidth}
        \centering
\begin{tikzpicture}
        \begin{axis}[
            width=8cm, 
            height=8cm, 
            view={-50}{50}, 
            grid=both,
            xlabel={$x$},
            ylabel={$y$},
            zlabel={$z$},
            xtick = \empty,
            ytick = \empty,
            ztick = \empty,
            xmin=-2, xmax=2,
            ymin=-2, ymax=2,
            zmin=-2, zmax=2,
            legend style={at={(1,1)}, anchor=north east, font=\small},
            legend cell align = left,
        ]
        
        \addplot3[
            only marks,
            mark=*,
            mark size=1pt,
            color=bleu
        ]
        table[
            col sep=comma,
            x=x,
            y=y,
            z=z
        ] {Data/wedges_spheres_original.csv};
        \addlegendentry{Original point cloud}
        
        \addplot3[
            only marks,
            mark=*,
            mark size=1pt,
            color=cardinal
        ]
        table[
            col sep=comma,
            x=x,
            y=y,
            z=z
        ] {Data/wedges_spheres_rerotated.csv};
        \addlegendentry{Re-Rotated point cloud}
        
        \end{axis}
    \end{tikzpicture}
    \end{minipage}
    \caption{A comparison of two wedged spheres with one being rotated around the wedge point and the points being perturbed by Gaussian noise (left) and the learned re-rotated sphere that is aligned with the original data (right).}
        \label{fig:wedged_spheres_rotated_vs_rerotated}
\end{figure*}

\paragraph{Robustness}
\Cref{fig:wedged_spheres_outliers_rotated_vs_rerotated} and
\Cref{fig:wedged_spheres_noise_rotated_vs_rerotated} show that the
spatial alignment of wedged spheres still works satisfactorily, even in
the presence of outliers and noise. 
This property is an important feature when dealing with real-world data,
which is often noisy, and enables us to align spaces that only
\emph{approximately} differ up to a rotation.
By contrast, the Hausdorff distance, i.e., a widely-used metric
between point clouds is~(by definition) \emph{highly sensitive} to outliers.
We therefore conclude that the proposed metric based on $\ECT$s is
a robust metric to compare point clouds of potentially different
cardinalities.

\begin{figure*}[htbp!]
    \begin{minipage}[t]{0.5\textwidth}
        \centering
        \ifarXiv%
          \includegraphics{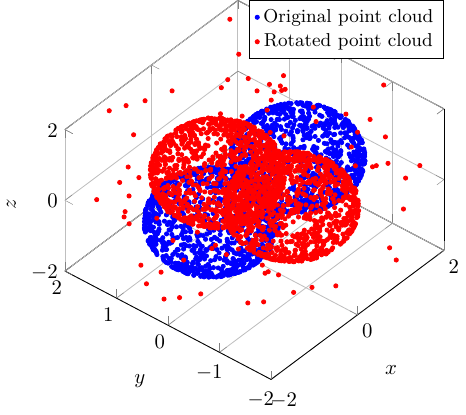}%
        \else%
\begin{tikzpicture}
        \begin{axis}[
            width=8cm, 
            height=8cm, 
            view={-50}{50}, 
            grid=both,
            xlabel={$x$},
            ylabel={$y$},
            zlabel={$z$},
            xtick = \empty,
            ytick = \empty,
            ztick = \empty,
            xmin=-2, xmax=2,
            ymin=-2, ymax=2,
            zmin=-2, zmax=2,
            legend style={at={(1,1)}, anchor=north east, font=\small}
        ]
        
        \addplot3[
            only marks,
            mark=*,
            mark size=1pt,
            color=bleu
        ]
        table[
            col sep=comma,
            x=x,
            y=y,
            z=z
        ] {Data/wedges_spheres_outliers.csv};
        \addlegendentry{Original point cloud}
        
        \addplot3[
            only marks,
            mark=*,
            mark size=1pt,
            color=cardinal
        ]
        table[
            col sep=comma,
            x=x,
            y=y,
            z=z
        ] {Data/wedges_spheres_outliers_rotated.csv};
        \addlegendentry{Rotated point cloud}
        
        \end{axis}
    \end{tikzpicture}
\fi%
    \end{minipage}%
    \begin{minipage}[t]{0.5\textwidth}
        \centering
        \ifarXiv%
          \includegraphics{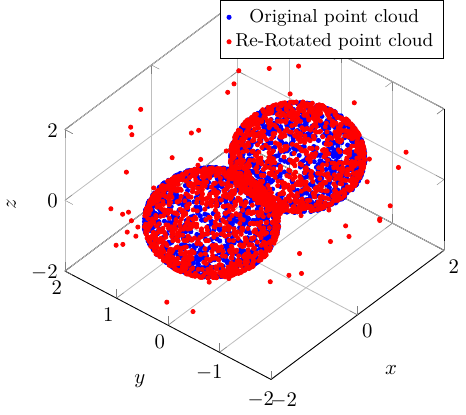}%
        \else%
\begin{tikzpicture}
        \begin{axis}[
            width=8cm, 
            height=8cm, 
            view={-50}{50}, 
            grid=both,
            xlabel={$x$},
            ylabel={$y$},
            zlabel={$z$},
            xtick = \empty,
            ytick = \empty,
            ztick = \empty,
            xmin=-2, xmax=2,
            ymin=-2, ymax=2,
            zmin=-2, zmax=2,
            legend style={at={(1,1)}, anchor=north east, font=\small}
        ]
        
        \addplot3[
            only marks,
            mark=*,
            mark size=1pt,
            color=bleu
        ]
        table[
            col sep=comma,
            x=x,
            y=y,
            z=z
        ] {Data/wedges_spheres_outliers.csv};
        \addlegendentry{Original point cloud}
        
        \addplot3[
            only marks,
            mark=*,
            mark size=1pt,
            color=cardinal
        ]
        table[
            col sep=comma,
            x=x,
            y=y,
            z=z
        ] {Data/wedges_spheres_outliers_rerotated.csv};
        \addlegendentry{Re-Rotated point cloud}
        
        \end{axis}
    \end{tikzpicture}
\fi%
    \end{minipage}
    \caption{A comparison of two wedged spheres, with one being rotated
    around the wedge point and added $200$ outliers (left) and the
  learned re-rotated sphere that is aligned with the original data
(right).}
        \label{fig:wedged_spheres_outliers_rotated_vs_rerotated}
\end{figure*}

\begin{figure*}[htbp!]
    \begin{minipage}[t]{0.5\textwidth}
        \centering
        \ifarXiv%
          \includegraphics{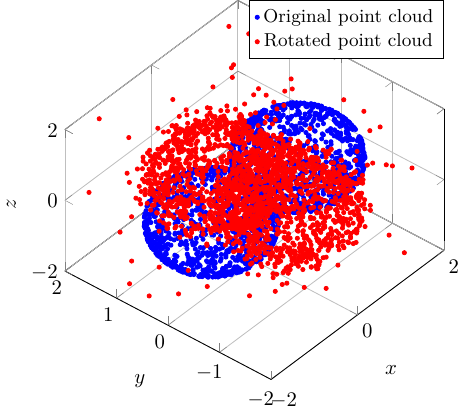}%
        \else%
\begin{tikzpicture}
        \begin{axis}[
            width=8cm, 
            height=8cm, 
            view={-50}{50}, 
            grid=both,
            xlabel={$x$},
            ylabel={$y$},
            zlabel={$z$},
            xtick = \empty,
            ytick = \empty,
            ztick = \empty,
            xmin=-2, xmax=2,
            ymin=-2, ymax=2,
            zmin=-2, zmax=2,
            legend style={at={(1,1)}, anchor=north east, font=\small}
        ]
        
        \addplot3[
            only marks,
            mark=*,
            mark size=1pt,
            color=bleu
        ]
        table[
            col sep=comma,
            x=x,
            y=y,
            z=z
        ] {Data/wedges_spheres_noise.csv};
        \addlegendentry{Original point cloud}
        
        \addplot3[
            only marks,
            mark=*,
            mark size=1pt,
            color=cardinal
        ]
        table[
            col sep=comma,
            x=x,
            y=y,
            z=z
        ] {Data/wedges_spheres_noise_rotated.csv};
        \addlegendentry{Rotated point cloud}
        
        \end{axis}
    \end{tikzpicture}
\fi%
    \end{minipage}%
    \begin{minipage}[t]{0.5\textwidth}
        \centering
        \ifarXiv%
          \includegraphics{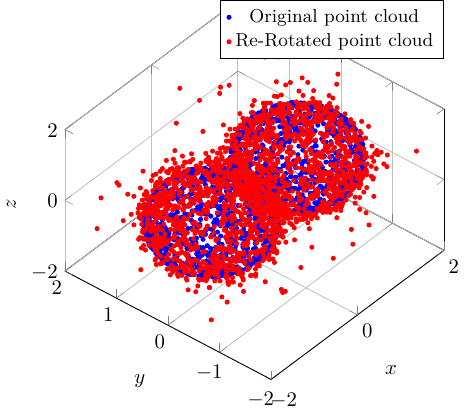}%
        \else%
\begin{tikzpicture}
        \begin{axis}[
            width=8cm, 
            height=8cm, 
            view={-50}{50}, 
            grid=both,
            xlabel={$x$},
            ylabel={$y$},
            zlabel={$z$},
            xtick = \empty,
            ytick = \empty,
            ztick = \empty,
            xmin=-2, xmax=2,
            ymin=-2, ymax=2,
            zmin=-2, zmax=2,
            legend style={at={(1,1)}, anchor=north east, font=\small}
        ]
        
        \addplot3[
            only marks,
            mark=*,
            mark size=1pt,
            color=bleu
        ]
        table[
            col sep=comma,
            x=x,
            y=y,
            z=z
        ] {Data/wedges_spheres_noise.csv};
        \addlegendentry{Original point cloud}
        
        \addplot3[
            only marks,
            mark=*,
            mark size=1pt,
            color=cardinal
        ]
        table[
            col sep=comma,
            x=x,
            y=y,
            z=z
        ] {Data/wedges_spheres_noise_rerotated.csv};
        \addlegendentry{Re-Rotated point cloud}
        
        \end{axis}
    \end{tikzpicture}
\fi%
    \end{minipage}
    \caption{A comparison of two wedged spheres, with one being rotated
    around the wedge point and the points being perturbed by \emph{Gaussian
    noise} (left) and the learned re-rotated sphere that is aligned with
the original data (right).}
        \label{fig:wedged_spheres_noise_rotated_vs_rerotated}
\end{figure*}

\end{document}
